\let\theoremstyle\undefined
\newif\ifuseboldmathops
\newif\ifuseittextabbrevs
	\newcommand{\ie}{{\it i.e.}}
	\newcommand{\ie}{i.e.}
\newcommand{\rank}{\mathop{\mathrm{rank}}}
\newcommand{\supp}{\mbox{Supp}}
\newcommand{\calF}{\mathcal{F}}
\newcommand{\calW}{\mathcal{W}}
\newcommand{\occ}{{\mathsf{Occ}}}
\acrodef{mdp}[MDP]{Markov Decision Process}
\acrodef{pomdp}[POMDP]{Partially Observable Markov Decision Process}
\acrodef{ltl}[LTL]{Linear Temporal Logic}
\acrodef{dfa}[DFA]{Deterministic Finite Automaton}
\acrodef{dfpa}[DFPA]{Deterministic Finite-State Preference Automaton}
\acrodef{spi}[SPI]{Safe and Positive Improving}
\acrodef{sasi}[SASI]{Safe and Almost-Sure Improving}
\theoremstyle{definition}
\newtheorem{definition}{Definition}
\newtheorem{example}{Example}
\newtheorem{problem}{Problem}
\newtheorem{lemma}{Lemma}
\newtheorem{proposition}{Proposition}
\newtheorem{corollary}{Corollary}
\newtheorem{theorem}{Theorem}
\newcommand{\asw}{\mathsf{ASWin}}
\newcommand{\poswin}{\mathsf{PWin}}
\acrodef{smdp}[Semi-MDP]{Semi-Markov decision process}
\acrodef{mcts}[MCTS]{Monte Carlo tree search}
\acrodef{uct}[UCT]{Upper Confidence Bound 1 applied to trees}
\acrodef{scltl}[scLTL]{syntactically co-safe LTL}
\acrodef{ssp}[SSP]{Stochastic Shortest Path}
\acrodef{p2sg}[SG(2)]{Two-player Stochastic Game}
\acrodef{mc}[MC]{Markov chain}
\acrodef{prefltl}[TPL]{ Temporal Preference Logic}
\acrodef{tld}[TLwD]{Temporal Logic with Distributions}
\acrodef{mtl}[Metric TL]{Metric Temporal Logic}
\acrodef{sta}[STA]{Stochastic Timed Automaton}
\newcommand{\dist}{\mathcal{D}}
\newcommand{\plays}{\mathsf{Plays}}
\newcommand{\prefplays}{\mathsf{PrefPlays}}
\newcommand{\calM}{\mathcal{M}}
\newcommand{\reach}[1]{\mathsf{Reach}(#1)}
\acrodef{gpf}[GPF]{generalized preference formula}
\acrodef{cp}[CP]{ceteris paribus}
\acrodef{milp}[MILP]{Mixed-Integer Linear Programming}
\acrodef{dfa}[DFA]{Deterministic Finite Automaton}
\newcommand{\strictpref}{\triangleright}
\newcommand{\weakpref}{\trianglerighteq}
\newcommand{\outcomes}{\mathsf{Outcomes}}
\newcommand{\powerset}{\wp}
\newcommand{\prefix}{\mathsf{PrefPlays}}
\newcommand{\cone}{\mathsf{Cone}}
\newcommand{\pref}{{\nu}}
\title{Opportunistic Qualitative Planning in Stochastic Systems with Incomplete Preferences over Reachability Objectives}
\author{Abhishek N. Kulkarni$^\ast$, and Jie Fu
\thanks{A. Kulkarni ($^\ast$ corresponding author) and J. Fu are with the Dept. of Electrical and Computer Engineering, University of Florida, Gainesville, Fl 32611 USA.
{\tt\small \{a.kulkarni2,fujie\}@ufl.edu}}
\thanks{This material is based upon work supported by the Air Force Office of Scientific Research under award number FA9550-21-1-0085.}
}
\begin{document}

\maketitle

\begin{abstract}
Preferences play a key role in determining what goals/constraints to satisfy when not all constraints can be satisfied simultaneously.	In this paper, we study how to synthesize preference satisfying plans in stochastic systems, modeled as a \ac{mdp}, given a (possibly incomplete) combinative preference model over temporally extended goals. We start by introducing new semantics to interpret preferences over infinite plays of the stochastic system. Then, we introduce a new notion of \emph{improvement} to enable comparison between two prefixes of an infinite play. Based on this, we define two solution concepts called \ac{spi} and \ac{sasi} that enforce improvements with a positive probability and with probability one, respectively. We construct a model called an improvement \ac{mdp}, in which the synthesis of \ac{spi} and \ac{sasi} strategies that guarantee at least one improvement reduces to computing positive and almost-sure winning strategies in an \ac{mdp}. We present an algorithm to synthesize the \ac{spi} and \ac{sasi} strategies that induce multiple sequential improvements. We demonstrate the proposed approach using a robot motion planning problem. 
%We present an algorithm to synthesize the \ac{spi} and \ac{sasi} strategies using a new model called an \emph{improvement \ac{mdp}}. In this model, we prove that the synthesis of \ac{spi} and \ac{sasi} strategies reduces to computing positive and almost-sure winning strategies in an \ac{mdp}. We demonstrate the proposed approach using a robot motion planning problem. 
\end{abstract}

\section{Introduction}
	\label{sec:intro}
	With the rise of artificial intelligence, robotics and autonomous systems are being designed to make complex decisions by reasoning about multiple goals at the same time. Preference-based planning (PBP) allows the systems to decide which goals to satisfy when not all of them can be achieved \cite{hastie2010rational}. Even though PBP has been studied since the early 1950s, most works on preference-based temporal planning (c.f. \cite{baier2008planning}) fall into at least one of the following categories: (a) those which assume that all outcomes are pairwise comparable---that is, the preference relation is \emph{complete} \cite{son2006planning,bienvenu2011specifying}, (b) those which study exclusionary preferences---that is, the set of outcomes is mutually exclusive (see \cite{stanford2022preferences} and the references within), (c) those which are interpreted over finite traces \cite{rahmani2022probabilistic}. In this work, we study the PBP problem for the class of systems in which the preference model is \emph{incomplete}, \emph{combinative} (as opposed to exclusionary) and is interpreted over \emph{infinite plays} of the stochastic system.

The motivation to study incomplete, combinative preferences comes from two well-known facts that the assumption of completeness is strong and, in many cases, unrealistic \cite{aumann1962utility}, and that combinative preferences are more expressive than exclusionary preferences \cite{hansson2001structure}. In many control applications, preferences may need to admit incompleteness because of (a) \emph{Inescapability}: An agent has to make decisions under time limits but with incomplete information about preferences because, for example, it lost communication with the server; and (b) \emph{Incommensurability}: Some situations, for instance, comparing the quality of an apple to that of banana, are fundamentally incomparable since they lack a standard basis to compare. Similarly, the common preferences in robotics such as ``visiting A is preferred to visiting B'' are better interpreted under a combinative model because a path visiting A may pass through B, which means that the outcomes (sets of plays of the stochastic model satisfying a certain property) are not mutually exclusive. % Moreover, it is known that all exclusionary models can be transformed into a combinative model, but the opposite does not always hold \cite{Hansson2001}.  

%It is important to study incomplete preference since the assumption of completeness is strong and, in many cases, unrealistic\cite{aumann1962utility}. In control applications, preferences may need to admit incompleteness because of (a) \emph{Inescapability}: An agent has to make decisions under time limits but with partial information about preferences because, for example, it lost communication with the server; and (b) \emph{Incommensurability}: Some situations, for instance, comparing the quality of an apple to that of banana, are fundamentally incomparable since they lack a standard basis to compare. 

%Similarly, combinative preferences are more expressive than exclusionary preferences \cite{Hansson2001}. For instance, a preference ``visiting A is preferred to visiting B'' could be interpreted in exclusionary way as ``visiting A (and not B) is preferred to visiting B (and not A).'' Whereas, the same statement in a combinative preference may be interpreted as ``visiting A (regardless of whether B is visited or not) is preferred to visiting B (and not A),'' thereby allowing the user to specify preference over outcomes that may be non-exclusive. \ak{motivate infinite words.} These situations motivate the need for a planner that deals with incomplete, combinative preferences interpreted over infinite plays in the presence of uncertainties in its environment.

Preference-based planning problems over temporal goals have been well-studied for deterministic planning given both complete and incomplete preferences (see \cite{baier2008planning} for a survey). For preference specified over temporal goals, several works \cite{tumova2013least, wongpiromsarn2021, rahmani2020what} proposed minimum violation planning methods that decide which low-priority constraints should be violated in a deterministic system. Mehdipour \emph{et al.}~\cite{mehdipourSpecifyingUserPreferences2021} associate weights with Boolean and temporal operators in signal temporal logic to specify the importance of satisfying the sub-formula and priority in the timing of satisfaction. This reduces the PBP problem to that of maximizing the weighted satisfaction in deterministic dynamical systems. However, the solutions to the PBP problem for deterministic systems cannot be applied to stochastic systems. This is because, in stochastic systems, even a deterministic strategy yields a distribution over outcomes. Hence, to determine a better strategy, we need a comparison of distributions---a task a deterministic planner cannot do.

Several works have studied the PBP problem for stochastic systems. Lahijanian and Kwiatkowska~\cite{Lahijanian2016} considered the problem of revising a given specification to improve the probability of satisfaction of the specification. They formulated the problem as a multi-objective \ac{mdp} problem that trades off minimizing the cost of revision and maximizing the probability of satisfying the revised formula. Li \emph{et al.}~\cite{li2020probabilistic} solve a preference-based probabilistic planning problem by reducing it to a multi-objective model checking problem. However, all these works assume the preference relation to be \emph{complete}. To the best of our knowledge, \cite{fu2021probabilistic} is the only work that studies the problem of probabilistic planning with incomplete preferences. The authors introduce the notion of the value of preference satisfaction for planning within a pre-defined finite time duration and developed a mixed-integer linear program to maximize the satisfaction value for a subset of preference relations. 

% \deleted{\ak{In comparison, our work uses the notion of \emph{improvement} to extend a prefix in a greedy way to achieve a \emph{most preferred} outcome.}}

The aforementioned work studied preference-based quantitative planning. In comparison, this work focuses on qualitative planning in \ac{mdp}s with preferences over a set of outcomes, representable by reachability objectives.  We first introduce new semantics to interpret an incomplete, combinative preference model over infinite plays of a stochastic system. We observe that uncertainties in the planning environment combined with infinite plays might give rise to opportunities to \emph{improve} the outcomes achieved by the agent. Thus, analogous to the idea of an \emph{improving flip} \cite{santhanamRepresentingReasoningQualitative2016}, we define the notion of \emph{improvement} that compares two prefixes of an infinite play to determine which one is more preferred, based on their different prospects regarding the set of possible, achievable objectives. Based on whether a strategy exists to enforce an improvement with a positive probability or with probability one, we introduce two solution concepts called \emph{safe and positively improving} and \emph{safe and almost-surely improving} which ensure an improvement can be made with positive probability and with probability one, respectively. The synthesis of  \ac{spi} and \ac{sasi} strategies is through a construction called \emph{improvement \ac{mdp}} and a reduction to that of computing positive and almost-sure winning strategies for some reachability objectives of the improvement \ac{mdp}. In the case of almost-surely improvement, we also provide an algorithm to determine the maximum number of improvements achievable given any given state. The correctness of the proposed algorithms is demonstrated through a robot motion planning example.%, we provide an algorithm using which we can synthesize a strategy to induce the maximum number of improvements guaranteed under that solution concept. \ak{need some help here! }

\section{Preliminaries}
	\label{sec:prelim}  
	\textbf{Notation.} Given a finite set $X$, the powerset of $X$ is denoted as $\powerset(X)$. The set of all finite (resp., infinite) ordered sequences of elements from $X$ is denoted by $X^\ast$ (resp., $X^\omega$). The set of all finite   ordered sequences of length $>0$ is denoted by $X^+$.
We write $\dist(X)$ to denote the set of probability distributions over $X$. The support of a distribution $D \in \dist(X)$ is denoted by $\supp(D) = \{x \in X \mid D(x) > 0\}$.

In this paper, we consider a class of decision-making problems in stochastic systems modeled as a \ac{mdp} without the reward function. We then introduce a preference model over the set of infinite plays in the \ac{mdp}.

\begin{definition}[MDP]
	An \ac{mdp} is a tuple $M = \langle S, A, T, \iota \rangle,$ where $S$ and $A$ are finite state and action sets, $\iota \in S$ is an initial state, and $T: S \times A \rightarrow \dist(S)$ is the transition probability function such that $T(s, a, s')$ is the probability of reaching the state $s' \in S$ when action $a \in A$ is chosen at the state $s \in S$. 
\end{definition}

A \emph{play} in an \ac{mdp} $M$ is an infinite sequence of states $\rho = s_0 s_1 \ldots \in S^\omega$ such that, for every integer $i \geq 0$, there exists an action $a \in A$ such that $T(s_i, a, s_{i+1})>0$. We denote the set of all plays starting from $s \in S$ in the \ac{mdp} by $\plays(M, s)$ and the set of all plays in $M$ is denoted by $\plays(M) = \bigcup_{s \in S}\plays(M, s)$. 
%\deleted{We write $\rho[i] = s_{i}$ to denote $(i+1)$-th state in $\rho$, and $\rho[i \colonidx j] = s_i s_{i+1} \ldots s_{j}$ to denote the sub-sequence of states from $(i+1)$-th to $(j+1)$-th index in $\rho$.} 
The set of states occurring in a play is given by $\occ(\rho) = \{s \in S \mid \exists i \ge 0, s_i = s\}$. A prefix of a play $\rho$ is a finite sub-sequence of states $\nu = s_0 s_1 \ldots s_{k}$, $k \geq 0$, whose the length is $|\nu| = k+1$. The set of all prefixes of a play $\rho$ is denoted by $\mathsf{Pref}(\rho)$. The set of all prefixes in $M$ is denoted by $\prefix(M) = \cup_{\rho \in \plays(M)} \mathsf{Pref}(\rho)$. Given a prefix $\nu = s_0 s_1 \ldots s_k \in \prefix(M)$, the sequence of states $s_{k+1} s_{k+2} \ldots \in S^\omega$ is called a suffix of $\nu$ if the play $\nu \rho' = s_0 s_1 \ldots s_k s_{k+1} s_{k+2} \ldots$ is an element of $\plays(M)$.

In this \ac{mdp}, we consider reachability objectives for the agent. Given a set $F \subseteq S$, a reachability objective is characterized by the set $\reach{F} = \{\rho \in \plays(M) \mid \occ(\rho) \cap F \neq \emptyset\}$, which contains all the plays in $M$ starting at the state $s \in S$ that visit $F$. Any play $\rho \in \plays(M)$ that satisfies a reachability objective $\reach{F}$ has a \emph{good prefix} $\nu \in \mathsf{Pref}(\rho)$ such that the last state of $\nu$ is in $F$ \cite{kupferman2001model}.

A finite-memory (resp., memoryless), non-deterministic strategy in the \ac{mdp} is a mapping $\pi: S^+ \rightarrow \powerset(A)$ (resp., $\pi: S \rightarrow \powerset(A)$) from a prefix to a subset of actions that can be taken from that prefix. The set of all finite-memory, nondeterministic strategies is denoted $\Pi$.
Given a prefix $\nu = s_0 \ldots s_k \in \prefix(M)$, a suffix $\rho = s_{k+1} s_{k+2} \ldots \in S^\omega$ is \emph{consistent} with $\pi$, if for all $i \ge 0$, there exists an action $a \in \pi(s_0 \ldots s_{k} \ldots s_{k+i})$ such that $T(s_i, a, s_{i+1})>0$. 
%A play $\rho \in \plays(M)$ is said to be \emph{consistent} with $\pi$ if, for all integers $i \geq 0$, there exists an action $a \in \pi(\rho[i])$ such that $T(\rho[i], a, \rho[i+1]) > 0$. 
Given an \ac{mdp} $M$, a prefix $\nu \in \prefix(M)$ and a strategy $\pi$, the \emph{cone} is defined as the set of consistent suffixes of $\nu$ w.r.t. $\pi$, that is  \[\cone(M, \nu, \pi) = \{\rho \in S^\omega \mid \nu \rho \text{ is consistent with } \pi \}.\] % \text{ and } \exists a \in \pi(\rho): T(\nu[k], a, \rho[0]) > 0\}$.

%$\cone(M, \nu, \pi) = \{\nu\rho \in \plays(M) \mid \exists a \in \pi(\rho): T(\nu[k], a, \rho[0]) > 0 \text{ and } \rho \text{ is consistent with } \pi \}$. 

Given a prefix $\pref \in \prefix(M)$ and a reachability objective $\reach{F}$, a (finite-memory/memoryless) strategy $\pi^{\poswin(F)}$ is said to be positive winning   if $\cone(M, \pref, \pi)\cap \reach{F}\ne \emptyset$. Similarly, a (finite-memory/memoryless) strategy $\pi^{\asw(F)}$ is said to be almost-sure winning   if $  \cone(M, \nu, \pi) \subseteq \reach{F}$. 
%\aknote{Use different symbols for prefix and plays + revise.}
% =====================================================

%Given an \ac{mdp} $M$ and a reachability objective $\reach{F}$ for some $F \subseteq S$, a strategy $\pi: S^+ \rightarrow \powerset(A)$ is said to be almost-sure (resp., positive) winning if, in the stochastic process $M_\pi$  induced by $\pi$, the formula $\varphi$ can be satisfied with probability one (resp., with a probability $> 0$). Formally, in the stochastic process $M_\pi=\{S_t\mid t\ge 1\}$, $\Pr(S_0S_1,\ldots \models \varphi) =1$ (resp., $>0$). \ak{good prefix}.  \ak{define an extension of a prefix --- assert consistency that the transition from last state of prefix to first state of extension is consistent with $T$.}

The set of \emph{states} in the \ac{mdp} $M$, starting from which the agent has an almost-sure (resp. positive) winning strategy to satisfy a reachability objective $F \in \mathbb{F}$ is called the almost-sure (resp., positive) winning region and is denoted by $\asw(F)$ (resp., $\poswin(F)$). The almost-sure and positive winning strategies in the product \ac{mdp} are known to be memoryless. The almost-sure winning region and strategies can be synthesized in polynomial time and linear time, respectively \cite{baier2008principles}. 
%\ak{recheck: do we refer to winning region anywhere?}
%\jf{there is a small concern, the cone is defined for the specific initial state $s_0$, where the winning region assumes every state can be the initial state. maybe you want to define "the set of plays starting from $s $ by $\plays(M,s)$". }

\section{Preference Model}
	\label{sec:pref-model}  
	%\ak{Intro to section.}

% Preference model. 
\begin{definition}
	A preference model is a tuple $\langle U, \succeq \rangle$, where $U$ is a countable set of outcomes and $\succeq$ is a reflexive and transitive binary relation on $U$.
\end{definition}

Given $u_1, u_2 \in U$, we write $u_1 \succeq u_2$ if $u_1$ is \emph{weakly preferred to} (\ie,  is at least as good as) $u_2$; and $u_1\sim u_2$ if $u_1\succeq u_2$ and $u_2 \succeq u_1$, that is,  $u_1$ and $u_2$ are \emph{indifferent}. We write $u_1 \succ u_2$ to mean that $u_1$ is \emph{strictly preferred} to $u_2$, \ie, $u_1\succeq u_2$ and $u_2 \not\succeq u_1$.  We write $u_1 \nparallel u_2$ if $u_1$ and $u_2$ are \emph{incomparable}. Since Def.~\ref{def:preference-model} allows outcomes to be incomparable, it models \emph{incomplete} preferences \cite{bouyssou2009concepts}.

We consider   planning objectives specified as  preferences over reachability objectives. 
%For example, user may specify that ``visiting A and B is preferred to visiting only A.'' To this end, we consider the preference model for the agent to be $\langle \mathbb{F}, \weakpref \rangle$, where $\mathbb{F} = \{\reach{F_1}, \reach{F_2}, \ldots, \reach{F_n}\}$ is a set of reachability objectives where $F_1, \ldots, F_n$ are subsets of $S$. 

\begin{definition}
\label{def:preference-model}
	A preference model over reachability objectives in an \ac{mdp} $M$ is a tuple $\langle \mathbb{F}, \weakpref \rangle$, where $\mathbb{F} = \{\reach{F_1}, \reach{F_2}, \ldots, \reach{F_n}\}$ is a set of reachability objectives such that $F_1, \ldots, F_n$ are subsets of $S$.
\end{definition}

Intuitively, a preference $\reach{F_1} \weakpref \reach{F_2}$ means that any play in $\reach{F_1}$ is weakly preferred to any play in $\reach{F_2}$. The strict preference ($\strictpref$), indifference and incomparability are understood similarly.
%\added{In the following, we slightly abuse the notation and denote by $F_1\weakpref F_2$ (resp. $F_1\strictpref F_2$ ) for $\reach{F_1} \weakpref \reach{F_2}$ (resp. $\reach{F_1} \strictpref \reach{F_2}$).}
%\deleted{Since a reachability objective is characterized by a subset of plays in $M$, the preference model in Def.~\ref{def:preference-model} is interpreted over the set of plays $\plays(M)$ (see Def.~\ref{def:semantics}). \ak{However, we need additional machinery to formalize the interpretation.}}

The model $\langle \mathbb{F}, \weakpref \rangle$ is a combinative preference model, as opposed to exclusionary one. This is because we do not assert the exclusivity condition $\reach{F_1} \cap \reach{F_2} = \emptyset$. This allows us to represent a preference such as ``Visiting A and B is preferred to visiting A,'' where the less preferred outcome must be satisfied first in order to satisfy the more preferred outcome. In literature, it is common to study exclusionary preference models (see \cite{baier2008planning,bienvenu2011specifying} and the references within) because of their simplicity \cite{stanford2022preferences}. However, we focus on planning with combinative preferences since they are more expressive than the exclusionary ones \cite{hansson2001structure}. That is, every exclusionary preference model can be transformed into a combinative one, but the opposite is not true. 

%In our case, it may be possible to transform the combinative model into an exclusionary one defined over the powerset of $\mathbb{F}$ \cite{Hansson2001}.  
%
%In literature, it is common to study exclusionary preference models (see \cite{BienvenuSpecifying,???} and the references within) because, in many cases, a combinative preference model can be transformed into an equivalent exclusionary one and vice versa \cite{Hansson2001}. However, in our case, such a transformation incurs an exponential cost (see Appendix~\ref{}). Thus, it becomes important to plan with combinative preferences. 

When a combinative preference model is interpreted over infinite plays, the agent needs a way to compare the sets of reachability objectives satisfied by two plays. For instance, in the example from previous paragraph, to compare a play that visits A and B with a play that visits only A, the agent must compare the sets $\{\reach{F_{B}}, \reach{F_A}\}$ with $\{\reach{F_A}\}$. Since visiting A and B is more preferred than visiting A, first play is preferred over the second. However, if the preference was ``visiting A is preferred over visiting B'', then the two plays would be indifferent since both visit the more preferred objective. In this case, the less preferred objective of visiting B does not influence the comparison of the sets. To formalize this notion, we define notion of most-preferred outcomes. 

% \deleted{first needs to augment the planning state space with  memory states so that $\reach{F_A}$  means that $A$ is visited and $\reach{F_{A\land B}}$ means that  both $A$ and $B$ is visited. Here $F_A, F_{A\land B}$ are subsets of states in the augmented planning state space. Then, the agent needs to  compare the set $\{\reach{F_{A \land B}}, \reach{F_A}\}$ with $\{\reach{F_A}\}$. To this end, we define notion of most-preferred outcomes. }

%the agent may have an opportunity to extend a finite prefix to satisfy a more preferred outcome. In the example from previous paragraph, after visiting A, the agent may attempt to visit B. Since any play could satisfy several outcomes in $\mathbb{F}$, we need a way to compare the sets of outcomes satisfied by two plays in order to interpret the model $\langle \mathbb{F}, \weakpref \rangle$. 

Given a non-empty subset $\mathbb{X} \subseteq \mathbb{F}$, let $\mathsf{MP}(\mathbb{X}) \triangleq \{R \in \mathbb{X} \mid \nexists R' \in \mathbb{X}: R' \strictpref R\}$ denote the set of most-preferred outcomes in $\mathbb{X}$.
%\jf{pay attention the use of $\strictpref$ and $\succ$.}
 
\begin{definition}
\label{def:mp-outcomes}
	Given a preference model $\langle \mathbb{F}, \weakpref \rangle$ and a play $\rho \in \plays(M)$, the set of most-preferred outcomes satisfied by $\rho$ is given by $\mathsf{MP}(\rho) \triangleq \mathsf{MP}(\{\reach{F} \in \mathbb{F} \mid \exists \nu \in \mathsf{Pref}(\rho): \nu \text{ is a good prefix for } \reach{F}\})$. 
\end{definition}

By definition, there is no outcome included in $\mathsf{MP}(\rho)$ that is preferred to any other outcome in $\mathsf{MP}(\rho)$. Thus, we have the following result.

\begin{lemma}
	For any play $\rho \in \plays(M)$, every pair of outcomes in $\mathsf{MP}(\rho)$ is incomparable to each other. 
\end{lemma}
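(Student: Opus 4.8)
The plan is to derive the statement directly from the definition of $\mathsf{MP}(\mathbb{X})$ together with a simple property of the strict preference relation $\strictpref$. Recall that $\mathsf{MP}(\rho)$ is by construction of the form $\mathsf{MP}(\mathbb{X})$ for the particular subset $\mathbb{X} = \{\reach{F} \in \mathbb{F} \mid \exists \nu \in \mathsf{Pref}(\rho) \text{ a good prefix for } \reach{F}\}$. So it suffices to show the general claim: for any non-empty $\mathbb{X} \subseteq \mathbb{F}$, every pair of distinct outcomes in $\mathsf{MP}(\mathbb{X})$ is incomparable.

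First I would fix two distinct outcomes $R, R' \in \mathsf{MP}(\mathbb{X})$ and recall what incomparability means: $R \nparallel R'$ iff $R \not\weakpref R'$ and $R' \not\weakpref R$. I would argue by contradiction. Suppose $R$ and $R'$ are comparable. Then at least one weak preference holds; without loss of generality say $R \weakpref R'$. Now there are two sub-cases. If $R' \not\weakpref R$, then by the definition of strict preference $R \strictpref R'$, which immediately contradicts $R' \in \mathsf{MP}(\mathbb{X})$, since $\mathsf{MP}(\mathbb{X})$ is defined to exclude any outcome for which some element of $\mathbb{X}$ is strictly preferred to it (and $R \in \mathbb{X}$ because $\mathsf{MP}(\mathbb{X}) \subseteq \mathbb{X}$). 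If instead $R' \weakpref R$ as well, then $R \sim R'$, i.e.\ they are indifferent; this is the case I would flag as the one needing attention. Strictly speaking, two indifferent-but-distinct outcomes are \emph{not} incomparable, so the lemma as literally stated would be false unless the intended reading of ``every pair'' is ``every pair of distinct outcomes'' \emph{and} one treats indifferent outcomes as effectively identical, or unless the outcomes $\reach{F_i}$ are assumed to be pairwise non-indifferent (or the preference is antisymmetric on $\mathbb{F}$). I would resolve this by noting that the lemma should be read either (a) over the quotient of $\mathbb{F}$ by $\sim$, or (b) under the implicit assumption that distinct reachability objectives in $\mathbb{F}$ are never indifferent; under either convention the indifference sub-case is vacuous and the argument goes through.

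So the key steps, in order, are: (1) reduce the statement about $\mathsf{MP}(\rho)$ to the generic statement about $\mathsf{MP}(\mathbb{X})$ using Def.~\ref{def:mp-outcomes}; (2) take two distinct elements of $\mathsf{MP}(\mathbb{X})$ and unfold the definition of incomparability; (3) assume comparability and, using $\mathsf{MP}(\mathbb{X}) \subseteq \mathbb{X}$, derive that one strictly dominates the other, contradicting membership in $\mathsf{MP}(\mathbb{X})$; (4) handle (or exclude, under the stated convention) the indifference case. The only genuine obstacle is step (4): the precise status of indifferent outcomes inside $\mathsf{MP}$. Everything else is a routine unfolding of the definition of $\mathsf{MP}(\mathbb{X}) = \{R \in \mathbb{X} \mid \nexists R' \in \mathbb{X}: R' \strictpref R\}$ and of $\strictpref$. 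I expect the author's proof to be a one- or two-line contradiction argument along exactly these lines, implicitly assuming distinct outcomes are not indifferent.
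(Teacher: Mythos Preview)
Your approach is essentially the same as the paper's, which in fact gives no proof at all beyond the sentence ``By definition, there is no outcome included in $\mathsf{MP}(\rho)$ that is preferred to any other outcome in $\mathsf{MP}(\rho)$'' immediately preceding the lemma. Your careful flagging of the indifference sub-case is well-taken---the paper simply glosses over it---and your proposed resolutions (quotient by $\sim$, or an implicit antisymmetry assumption on distinct reachability objectives) are exactly the right way to make the statement precise.
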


Now, we formally define the interpretation of $\langle \mathbb{F}, \weakpref \rangle$ in terms of the preference relation it induces on $\plays(M)$.

\begin{definition}
	Let $\langle \plays(M), \succeq \rangle$ be the preference model induced by $\langle \mathbb{F}, \weakpref \rangle$. Then, for any $\rho_1, \rho_2 \in \plays(M)$, we have
	
	\begin{itemize}
		\item $\rho_1 \succ \rho_2$ if and only if there exist a pair of outcomes $R \in \mathsf{MP}(\rho_1)$ and $R' \in \mathsf{MP}(\rho_2)$ such that $R \strictpref R'$, and there does not exist a pair of outcomes $R \in \mathsf{MP}(\rho_1)$ and $R' \in \mathsf{MP}(\rho_2)$ such that $R' \strictpref R$.
		
		\item $\rho_1 \sim \rho_2$ if and only if $\mathsf{MP}(\rho_1) = \mathsf{MP}(\rho_2)$.

		\item $\rho_1 \nparallel \rho_2$, otherwise.
	\end{itemize}
\end{definition}

\section{Solution Concept}
	\label{sec:problem}
	In preference-based planning, the agent is to choose its next action given a finite prefix $\pref \in \prefix(M)$ in order to satisfy the given preference relation on a set of outcomes. A na\"ive approach to this problem is to follow the strategy to satisfy a most-preferred outcome from the set of almost-surely achievable outcomes given $\nu$. However, this is not sufficient as illustrated by the following example.

\begin{example}
	Consider the toy \ac{mdp} shown in the Fig.~\ref{fig:illustration}. To clarify, the exact probabilities are omitted. The transitions are understood as follows: Given action $a$ at state $s_0$, it is possible to reach both $s_5$ and $s_1$ with positive probabilities.
	Given the three sets $F_1 = \{s_1, s_5\}, F_2 = \{s_2, s_4\}$ and $F_3 = \{s_3\}$, let $\langle \mathbb{F}, \weakpref \rangle$ be the preference model such that $\mathbb{F} = \{\reach{F_1}, \reach{F_2}, \reach{F_3}\}$ and $\reach{F_2} \strictpref \reach{F_1}$ and $\reach{F_3} \strictpref \reach{F_1}$. Consider the state $s_0$ at which the agent is to choose its next action. From $s_0$, the agent can visit $F_1$ almost surely by choosing the action $a$. It, however, does not have an almost sure winning strategy to visit either $F_2$ or $F_3$, individually. But, by choosing action $b$ at $s_0$, the agent will almost surely visit either $F_2$ or $F_3$ and, thereby, achieve an outcome strictly better than $F_1$. 
	
	\begin{figure}
		\centering
        \begin{tikzpicture}[->,>=stealth',shorten >=1pt,auto,node distance=2.5cm, scale = 0.5,transform shape,align=center]
		\node[state] (5) {\LARGE $s_5$};
		\node[state] (0) [right of=5] {\LARGE $s_0$};
		\node[state] (4) [right of=0] {\LARGE $s_4$};
		\node[state] (1) [below=1.5cm of 5] {\LARGE $s_1$};
		\node[state] (2) [right of=1] {\LARGE $s_2$};
            \node[state] (3) [right of=2] {\LARGE $s_3$};
		% \node[state] (6) [below of=5] {\LARGE $F$};
		
		\path 
		(0) edge node {\LARGE $a$} (1)
		(0) edge node {\LARGE $b$} (2)
		(0) edge node {\LARGE $b, c$} (3)
		(0) edge node {\LARGE $c$} (4)
		(0) edge node {\LARGE $a$} (5)
		% (1) edge node {\LARGE $b$} (2)
		% (1) edge[bend right] node {\LARGE $b$} (3)
		(5) edge node {\LARGE $a$} (1)
		% (3) edge node {} (6)
		
%		(2) edge node {} (3)
%		(4) edge node {} (3)
%		(2) edge node {} (5)
%		(4) edge node {} (5)
		;
	\end{tikzpicture}
	
		\caption{Toy example to illustrate the limitation of almost-sure winning solution concept for preference-based planning. The states with no outgoing transitions are sink states.}
		\label{fig:illustration}
	\end{figure}
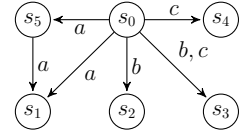
\end{example}

The example highlights that almost-sure winning solution concept is not suitable for preference-based planning because it reasons about exactly one outcome at a time. As a result, the agent cannot reason about opportunities to achieve a better outcome that may become available to due to stochasticity in the environment.

In the sequel, we introduce two new solution concepts for probabilistic planning under incomplete preferences interpreted over infinite plays. Our solution concepts are based upon the notion of an \emph{improvement} that generalizes the idea of \emph{improving flip} \cite{santhanamRepresentingReasoningQualitative2016} which is defined for propositional preferences. An improving flip compares two outcomes representable as propositional logic formulas to determine which is more preferred. Analogously, an improvement compares two prefixes of a play to determine which one can yield a more preferred outcome with probability one.

Given a prefix $\nu$, let $\outcomes(\nu) = \{\reach{F} \in \mathbb{F} \mid \exists \pi \in \Pi, \forall \rho \in \cone(M, \nu, \pi) : \rho \in \reach{F}\}$ be the set of outcomes, each of which can be achieved almost-surely under some strategy. Note that different outcomes may require different policies to achieve them.

\begin{definition}
	\label{def:improvement} 
	Given a play $\rho \in \plays(M)$ and two of its prefixes $\nu, \nu' \in \mathsf{Pref}(\rho)$ such that $|\nu'| > |\nu|$, $\nu'$ is said to be an \emph{improvement} of $\nu$ if there exists a pair of outcomes $R \in \mathsf{MP}(\outcomes(\nu))$ and $R' \in \mathsf{MP}(\outcomes(\nu'))$ such that $R' \strictpref R$. And, $\nu'$ is said to be a \emph{weakening} of $\nu$ if there exists a pair of outcomes $R \in \mathsf{MP}(\outcomes(\nu))$ and $R' \in \mathsf{MP}(\outcomes(\nu'))$ such that $R \strictpref R'$.
\end{definition}

Given a prefix $s_0 s_1 \ldots s_k \in \prefix(M)$, the transition from $s_{k-1}$ to $s_k$ is said to be an \emph{improving transition} if the prefix $s_0 s_1 \ldots s_{k-1} s_k$ is an improvement over $s_0 s_1 \ldots s_{k-1}$. A play that contains an improving transition is called an \emph{improving play}. It is noted that a prefix $\nu'$ can simultaneously be an improvement and a weakening of a prefix $\nu$.

Next, we define the two solution concepts that, while avoiding any weakening, induce improvements either with positive probability or with probability one.

% \begin{definition}[SPI/SASI Strategy]
% \label{def:spi-strategy}
% 	Given a prefix $\nu = s_0 s_1 \ldots s_k \in \prefix(M)$, a strategy $\pi: S^+ \rightarrow 2^A$ is said to be \emph{safe and positively (resp., safe and almost-surely) improving} for $\nu$ if there exists a (resp., for any) play $\rho \in \cone(M, \nu, \pi)$ such that the play $\nu \rho$ satisfies the following conditions.
% 	\begin{enumerate}
% 		\item (Safety) For all integers $j > k$, $s_0 s_1 \ldots s_j$ is not a weakening of $s_0 s_1 \ldots s_k$.

% 		\item (Improvement) There exists an integer $j > k$ such that $s_0 s_1 \ldots s_j$ is an improvement over $s_0 s_1 \ldots s_k$.
% 	\end{enumerate}
% \end{definition}

\begin{definition}[SPI/SASI Strategy]
\label{def:spi-strategy}
	Given a prefix $\nu = s_0 s_1 \ldots s_k \in \prefix(M)$, a strategy $\pi: S^+ \rightarrow 2^A$ is said to be \emph{safe and positively (resp., safe and almost-surely) improving} for $\nu$ if the following conditions hold:
 % there exists a (resp., for any) play $\rho \in \cone(M, \nu, \pi)$ such that the play $\nu \rho$ satisfies the following conditions.
	\begin{enumerate}
		\item (Safety) For all $\rho \in \cone(M, \nu, \pi)$, the play $\nu \rho$ satisfies that $s_0 s_1 \ldots s_j$ is not a weakening of $s_0 s_1 \ldots s_k$ for any integer $j > k$.

		\item (Improvement) There exists (resp., for any)  $\rho \in \cone(M, \nu, \pi)$, the play $\nu \rho$ satisfies the condition that there exists an integer $j > k$ such that $s_0 s_1 \ldots s_j$ is an improvement over $s_0 s_1 \ldots s_k$. 
	\end{enumerate}
\end{definition}

% ================================== 
% DO NOT DELETE
%Notice that the \ac{spi} and \ac{sasi} strategies are greedy since they always try to improve the most preferred outcomes that can be achieved in the future. Also, these strategies do not ensure satisfaction of the most preferred objective. Here, the idea is to follow an almost-sure winning strategy once no further improvements can be made. 

%In addition to being improving, Def.~\ref{def:spi-strategy} imposes a safety condition on any suffix $\rho$ induced by following the \ac{spi} or \ac{sasi} strategy given a prefix $\nu$. This is to prevent the strategy from inducing a play that intentionally worsens the set of most preferred almost-surely achievable outcomes and then improves it. If that is the case, then one cannot guarantee a net improvement over $\nu$.
% ================================== 

We now state our problem statement. 

\begin{problem}
\label{problem}
	Given an \ac{mdp} $M$ and a preference model $\langle \mathbb{F}, \weakpref\rangle$, design an algorithm to synthesize an \ac{spi} and a \ac{sasi} strategy. 
\end{problem}

\section{Opportunistic Qualitative Planning with Incomplete Preferences}  
	\label{sec:planning}
	%To synthesize SPI and SASI strategies, we distinguish between opportunistic states, i.e., those in which an improvement can be achieved, and non-opportunistic states, and we bias the strategy to visit the opportunistic states to maximize the likelihood of improvement.

%Our approach to synthesizing \ac{spi} and \ac{sasi} strategies distinguishes between \emph{opportunistic}, \ie, the states from which an improvement is possible, and \emph{non-opportunistic} states, and biases the strategy to visit the opportunistic states in order to maximize the likelihood of an improvement. 

Our approach to synthesize \ac{spi} and \ac{sasi} strategies distinguishes between \emph{opportunistic} states, \ie, the states from which an improvement could be made, and \emph{non-opportunistic} states. We now introduce a new model called \emph{an improvement \ac{mdp}} to synthesize the \ac{spi} and \ac{sasi} strategies. 

To facilitate the definition,  we slightly abuse the notation and let $\mathsf{MP}(s) \triangleq \mathsf{MP}(\{\reach{F} \in \mathbb{F} \mid s \in \asw(F)\})$ be the set of outcomes almost-surely achievable from state $s$ in $M$.

\begin{definition}[Improvement \ac{mdp}]
\label{def:improvement-mdp}
	Given an \ac{mdp} $M = \langle S, A, T, s_0 \rangle$ and a preference model $\langle \mathbb{F}, \weakpref \rangle$, an \emph{improvement \ac{mdp}} is the tuple,
	\[
		\calM = \langle V, A, \Delta, v_0, \calF \rangle,
	\] 
	where $V = S \times \{0, 1\}$ is the set of states, $A$ is the same set of actions as $M$, $v_0 = (s_0, 0)$ is the initial state, and $\calF = \{(s, 1) \mid s \in S\}$ is a set of final states. The transition function $\Delta: V \times A \rightarrow \dist(V)$ is defined as follows: For any states $v = (s, m), v' = (s', m') \in V$ and for any action $a \in A$, $\Delta(v, a, v') > 0$ holds if and only if the following conditions hold: 
	\begin{enumerate}
		\item $T(s, a, s') > 0$. 
		
		\item (Safety) For all pairs of outcomes $R \in \mathsf{MP}(s)$ and $R' \in \mathsf{MP}(s')$, we have $R \not \strictpref R'$. 
		
		\item (Improvement) If there exists a pair $R \in \mathsf{MP}(s)$ and $R' \in \mathsf{MP}(s')$ such that $R' \strictpref R$, then $m' = 1$ else $m' = 0$. 
	\end{enumerate} 
\end{definition}

Every play $\rho = s_0 s_1 \ldots \in \plays(M)$ induces a play $\varrho = v_0 v_1 \ldots$ in $\calM$ such that for all $i = 0, 1, \ldots$, $v_i = (s_i, m_i)$ where $m_i \in \{0, 1\}$ represents a memory element that captures whether the transition from $s_{i-1}$ to $s_i$ is improving. The following proposition highlights important features of the improvement \ac{mdp}. Before that we note the following fact to prove Proposition~\ref{prop:imdp-properties}.

\begin{lemma}
	\label{lma:memoryless}
	For every prefix $\nu = s_0 s_1 \ldots s_k \in \prefplays(M)$, it holds that $\outcomes(\nu) = \outcomes(s_k)$ and thus $\mathsf{MP}(\outcomes(\nu)) = \mathsf{MP}(\outcomes(s_k))$.
\end{lemma}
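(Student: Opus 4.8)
The plan is to show that the set $\outcomes(\nu)$ depends only on the last state $s_k$ of the prefix $\nu = s_0 s_1 \ldots s_k$, and not on the history leading to it. The key observation is that $\outcomes(\nu)$ is defined purely in terms of the existence of a strategy $\pi \in \Pi$ such that every suffix $\rho$ consistent with $\pi$ from $\nu$ satisfies a reachability objective $\reach{F}$. Since reachability is a prefix-independent-in-the-tail property (a play $\nu\rho$ belongs to $\reach{F}$ iff $\occ(\nu\rho) \cap F \neq \emptyset$), and since the states already visited in $\nu$ are fixed, the only freedom the strategy has is in choosing actions for the continuation starting from $s_k$. I would therefore argue the two inclusions separately.

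First I would show $\outcomes(s_k) \subseteq \outcomes(\nu)$: given $\reach{F} \in \outcomes(s_k)$, there is a strategy witnessing that $F$ is reached almost surely (in the qualitative sense used here, i.e.\ $\cone(M, s_k, \pi') \subseteq \reach{F}$) from the single-state prefix $s_k$. Lift $\pi'$ to a strategy $\pi$ on prefixes extending $\nu$ by ignoring the prefix $s_0\ldots s_{k-1}$ and applying $\pi'$ to the remaining suffix; then every $\rho \in \cone(M, \nu, \pi)$ has its suffix-from-$s_k$ consistent with $\pi'$, hence visits $F$, so $\nu\rho \in \reach{F}$. Conversely, for $\outcomes(\nu) \subseteq \outcomes(s_k)$: given $\reach{F} \in \outcomes(\nu)$ with witness $\pi$, note that if some state of $\{s_0,\ldots,s_{k-1}\}$ already lies in $F$ then — here one must be slightly careful — $F$ is trivially "already reached," but since $s_k$ is itself a state from which the agent continues, and the relevant notion of $\outcomes(s_k)$ asks for a strategy guaranteeing a good prefix, one can still exhibit such a strategy from $s_k$ (indeed any strategy works if $F$ is already hit, but to keep $\outcomes$ well-behaved we just use the tail of $\pi$). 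Define $\pi'$ on prefixes $\mu$ starting at $s_k$ by $\pi'(\mu) = \pi(s_0\ldots s_{k-1}\mu)$; then consistent suffixes from $s_k$ under $\pi'$ correspond bijectively to consistent suffixes from $\nu$ under $\pi$, and reachability of $F$ transfers. Finally, $\mathsf{MP}(\outcomes(\nu)) = \mathsf{MP}(\outcomes(s_k))$ follows immediately since $\mathsf{MP}$ is a function of the set alone.

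The main subtlety — not so much an obstacle as a point requiring care — is the treatment of states of $F$ that may already have been visited within the prefix $\nu$ before $s_k$. One needs to confirm that the definition of $\outcomes$ (via cones and good prefixes) is consistent with the intuition that "already satisfied" objectives remain in $\outcomes(s_k)$; this is where the phrase "good prefix" in Definition~\ref{def:mp-outcomes} and the prefix-closure of the reachability set do the work. A clean way to sidestep any ambiguity is to note that $\reach{F}$ is an upward-closed (suffix-extension-closed) set of plays, so membership of $\nu\rho$ in $\reach{F}$ depends only on whether $F$ has been hit by some prefix, and the strategy lifting above preserves exactly that event. Once this is made precise, both inclusions are routine, and the lemma — which is essentially the statement that qualitative winning regions for reachability are memoryless, already invoked in the paper for $\asw(F)$ — follows.
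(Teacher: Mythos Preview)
Your proposal is correct. The paper's own proof is considerably terser: it is a one-line appeal to the classical result that memoryless strategies suffice for almost-sure reachability in MDPs, from which the history-independence of $\outcomes(\cdot)$ follows immediately. Your argument instead gives a direct, self-contained construction of the strategy lifting and projection between the prefix $\nu$ and its last state $s_k$, without invoking the external memorylessness theorem. What your approach buys is explicitness and a certain robustness (it works for arbitrary finite-memory witnesses, not just memoryless ones); what the paper's approach buys is brevity.

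One minor remark: the ``subtlety'' you flag --- states of $F$ possibly already visited among $s_0,\ldots,s_{k-1}$ --- does not actually arise under the paper's formal definition of $\outcomes(\nu)$, which requires the \emph{suffix} $\rho\in\cone(M,\nu,\pi)$ itself, rather than the full play $\nu\rho$, to lie in $\reach{F}$. Under that reading the prefix's visits to $F$ are irrelevant and the two inclusions are perfectly symmetric. Your tail-of-$\pi$ construction handles either reading, so this does not affect correctness.
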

The proof follows from the fact that memoryless strategies are sufficient to ensure the satisfaction of reachability objectives in \ac{mdp}s \cite{de2001quantitative}. In other words, if an outcome is  almost-surely  achievable given a prefix $\nu= s_0s_1\ldots s_k$, then it is almost-surely achievable given $s_k$.

For convenience, we will write $\mathsf{MP}(s) = \mathsf{MP}(\outcomes(s))$ to denote the set of most preferred outcomes satsfiable/achievable with some strategy from a state $s \in S$.

\begin{proposition}
\label{prop:imdp-properties}
	For any play $\varrho = v_0 v_1 \ldots \in \plays(\calM)$ such that $v_i = (s_i, m_i)$ for all $i = 0, 1 \ldots$, the following statements hold. 
	\begin{enumerate}
		\item (Safety). For every prefix $v_0 v_1 \ldots v_j \in \prefix(\varrho)$, $s_0 s_1 \ldots s_j$ is not a weakening of $s_0 s_1 \ldots s_i$ for any $0 \leq i < j$.
		
%		For every pair of prefixes $\nu = v_0 v_1 \ldots v_i, \nu' = v_0 v_1 \ldots v_j \in \prefix(\varrho)$ such that $i < j$, $s_0 s_1 \ldots s_j$ is not a weakening of $s_0 s_1 \ldots s_i$. 

		\item (Improvement). For every integer $k > 0$ such that $s_k \in \calF$, the prefix $s_0 s_1 \ldots s_k$ is an improvement of $s_0 s_1 \ldots s_{k-1}$. 		
	\end{enumerate}
\end{proposition}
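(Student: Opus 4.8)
\emph{Proof idea.} The plan is to prove both parts by first using Lemma~\ref{lma:memoryless} to replace each history-dependent set $\outcomes(s_0\ldots s_\ell)$ by the state-indexed set $\mathsf{MP}(s_\ell)=\mathsf{MP}(\outcomes(s_\ell))$, and then to extract the required (non-)existence of dominating outcome pairs directly from the transition condition in Definition~\ref{def:improvement-mdp}. Throughout I use that $\varrho$ being a play of $\calM$ means that for every $\ell$ there is an action $a$ with $\Delta(v_\ell,a,v_{\ell+1})>0$, so conditions~(1)--(3) of Definition~\ref{def:improvement-mdp} hold for each consecutive pair $(v_\ell,v_{\ell+1})$.

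The Improvement claim (part~2) is the easier one and I would establish it directly. Fix $k>0$ with $v_k\in\calF$, i.e.\ $m_k=1$. Applying condition~(3) to the transition from $v_{k-1}=(s_{k-1},m_{k-1})$ to $v_k=(s_k,1)$: since $m_k\neq 0$, there must exist $R\in\mathsf{MP}(s_{k-1})$ and $R'\in\mathsf{MP}(s_k)$ with $R'\strictpref R$. By Lemma~\ref{lma:memoryless} this is precisely a pair $R\in\mathsf{MP}(\outcomes(s_0\ldots s_{k-1}))$ and $R'\in\mathsf{MP}(\outcomes(s_0\ldots s_k))$ with $R'\strictpref R$; since $s_0\ldots s_{k-1}$ and $s_0\ldots s_k$ are prefixes of the same play with $k+1>k$, Definition~\ref{def:improvement} yields that $s_0\ldots s_k$ is an improvement of $s_0\ldots s_{k-1}$.

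For the Safety claim (part~1), fix $0\le i<j$. By Lemma~\ref{lma:memoryless}, ``$s_0\ldots s_j$ is a weakening of $s_0\ldots s_i$'' is equivalent to the existence of $R\in\mathsf{MP}(s_i)$ and $R'\in\mathsf{MP}(s_j)$ with $R\strictpref R'$, so it suffices to rule this out. I would argue by induction on $j-i$. The base case $j=i+1$ is immediate from condition~(2) of Definition~\ref{def:improvement-mdp} applied to $(v_i,v_{i+1})$. For the inductive step I would need to show that this one-step guarantee composes along the play.

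This composition is the step I expect to be the main obstacle: condition~(2) only relates $\mathsf{MP}(s_\ell)$ to the \emph{immediate} successor set $\mathsf{MP}(s_{\ell+1})$, and the relation ``no outcome of the earlier set strictly dominates an outcome of the later set'' is not transitive by itself---a chain of transitions through pairwise \emph{incomparable} most-preferred sets is safe at every step yet could, a priori, end at an outcome strictly worse than where it began. To close this gap I would look for an auxiliary monotonicity lemma for plays of $\calM$: for every $\ell'\ge\ell$ and every $R'\in\mathsf{MP}(s_{\ell'})$, no $R\in\mathsf{MP}(s_\ell)$ satisfies $R\strictpref R'$. Establishing this invariant---rather than just the one-step version of condition~(2)---is where the real work lies; it would likely require a case analysis distinguishing the outcomes of $\mathsf{MP}(s_{\ell+1})$ that are weakly preferred to some outcome of $\mathsf{MP}(s_\ell)$ from those incomparable to all of them, carried along with a hypothesis strong enough to survive the incomparable case. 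Given such a lemma, part~1 follows by instantiating $\ell=i$ and $\ell'=j$.
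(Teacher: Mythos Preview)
Your argument for part~(2) is correct and is exactly what the paper does: apply condition~(3) of Definition~\ref{def:improvement-mdp} to the step $v_{k-1}\to v_k$ and translate via Lemma~\ref{lma:memoryless}. For part~(1) you are in fact more careful than the paper, whose sketch simply asserts that ``every transition \ldots\ that violates the condition is disabled'' and never addresses the composition problem you single out.

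Your caution is well placed, because the auxiliary monotonicity lemma you hope to establish is false in general, so the step you label ``where the real work lies'' would fail. Take three outcomes $A,B,C$ with $A\strictpref C$ but $A\incomparable B$ and $B\incomparable C$ (this is a valid reflexive, transitive preference), and build an \ac{mdp} in which $\mathsf{MP}(s_0)=\{A\}$, $\mathsf{MP}(s_1)=\{B\}$, $\mathsf{MP}(s_2)=\{C\}$ with transitions $s_0\to s_1\to s_2$. This is easy to arrange: from each $s_\ell$ let one action reach the corresponding goal deterministically, and let another action reach $s_{\ell+1}$ with probability~$\tfrac12$ and a dead sink with probability~$\tfrac12$; then only the ``own'' goal is in $\asw$ at each $s_\ell$. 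Each one-step move satisfies the Safety clause of Definition~\ref{def:improvement-mdp} (neither $A\strictpref B$ nor $B\strictpref C$ holds), so $(s_0,0)(s_1,0)(s_2,0)\ldots$ is a legitimate play of $\calM$; yet $s_0s_1s_2$ \emph{is} a weakening of $s_0$ because $A\in\mathsf{MP}(s_0)$, $C\in\mathsf{MP}(s_2)$, and $A\strictpref C$. Thus neither your proposed inductive lemma nor the paper's one-line appeal to disabled transitions can close this gap: the Safety statement as written needs structure beyond what Definitions~\ref{def:improvement} and~\ref{def:improvement-mdp} provide.
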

\begin{proof}[Proof (Sketch)]
	For statement (1) to hold, it must be the case that $R \not\strictpref R'$  holds for all pairs of outcomes $R \in \mathsf{MP}(s_{i})$ and $R' \in \mathsf{MP}(s_{j})$. This is true because of Lma.~\ref{lma:memoryless} and the fact that every transition from $v_i$ to $v_{i+1}$, $j < i \leq k$, that violates the condition is disabled by Def.~\ref{def:improvement-mdp}.  
	
	To see why statement (2) holds, consider an integer $k > 0$ such that $v_k \in \calF$. Then, by construction, there exists a pair $R \in \mathsf{MP}(s_{k-1})$ and $R' \in \mathsf{MP}(s_{k})$ such that $R' \strictpref R$. 
\end{proof}

In words, the improvement \ac{mdp} guarantees by construction that no play in $\plays(\calM)$ violates the safety condition of Def.~\ref{def:spi-strategy}. Moreover, it helps identify the opportunistic states as the ones that have an outgoing transition into $\calF$.

\begin{corollary}	
	A play $\varrho \in \plays(\calM)$ is improving if and only if $\occ(\varrho) \cap \calF \neq \emptyset$.
\end{corollary}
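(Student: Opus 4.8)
The plan is to unwind the definition of an improving play in terms of its projection onto $M$, and then read off both implications from the two clauses of Proposition~\ref{prop:imdp-properties} together with Lemma~\ref{lma:memoryless}. Write $\varrho = v_0 v_1 \ldots$ with $v_i = (s_i, m_i)$, and let $\rho = s_0 s_1 \ldots \in \plays(M)$ be the induced play; by convention $\varrho$ is improving exactly when $\rho$ contains an improving transition, i.e.\ when there is some $k > 0$ with $s_0 \ldots s_k$ an improvement of $s_0 \ldots s_{k-1}$.

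For the direction ``improving $\Rightarrow$ reaches $\calF$'', I would fix such a $k$ and invoke Lemma~\ref{lma:memoryless} to rewrite $\mathsf{MP}(\outcomes(s_0\ldots s_{k-1})) = \mathsf{MP}(s_{k-1})$ and $\mathsf{MP}(\outcomes(s_0\ldots s_k)) = \mathsf{MP}(s_k)$. By Definition~\ref{def:improvement}, there is then a pair $R \in \mathsf{MP}(s_{k-1})$, $R' \in \mathsf{MP}(s_k)$ with $R' \strictpref R$. The Improvement clause~(3) of Definition~\ref{def:improvement-mdp} forces $m_k = 1$ for the transition $v_{k-1} \to v_k$ in $\calM$, hence $v_k \in \calF$ and $\occ(\varrho) \cap \calF \neq \emptyset$.

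For the converse, suppose $v_k \in \calF$ for some index $k$, so $m_k = 1$. Since $v_0 = (s_0,0)$ has $m_0 = 0$, necessarily $k > 0$. Statement~(2) of Proposition~\ref{prop:imdp-properties} then says precisely that $s_0 \ldots s_k$ is an improvement of $s_0 \ldots s_{k-1}$, i.e.\ the transition from $s_{k-1}$ to $s_k$ is an improving transition, so $\varrho$ is improving.

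The only real point requiring care — and the step I expect to be the mild obstacle — is the bookkeeping that links the notion of ``improving play,'' which Definition~\ref{def:improvement} states over $\plays(M)$, to plays of $\calM$ via the canonical projection, and the use of Lemma~\ref{lma:memoryless} to replace the prefix-indexed sets $\outcomes(\nu)$ by the state-indexed sets $\mathsf{MP}(s)$ that appear in Definition~\ref{def:improvement-mdp}; once that identification is in place, both directions are immediate from the Proposition. I would also remark that the base case $m_0 = 0$ is what guarantees the first final state encountered has index $k>0$, so that ``$s_0\ldots s_{k-1}$'' is well defined.
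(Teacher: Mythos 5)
Your proof is correct and follows exactly the route the paper intends: the paper states this corollary without proof as an immediate consequence of statement~(2) of Proposition~\ref{prop:imdp-properties} (for the ``$\Leftarrow$'' direction) and of clause~(3) of Definition~\ref{def:improvement-mdp} together with Lemma~\ref{lma:memoryless} (for the ``$\Rightarrow$'' direction). Your extra care in handling the projection from $\plays(\calM)$ to $\plays(M)$ and the observation that $m_0 = 0$ forces the first final state to occur at index $k>0$ are both sound and merely make explicit what the paper leaves implicit.
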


As a result, the problem of determining whether an improvement is possible from a state $v \in V$ reduces to checking whether a state in $\calF$ can be reached from $v$ with a positive probability (in case of \ac{spi} strategy) or with probability one (in case of \ac{sasi} strategy).

\begin{theorem}
	\label{thm:spi-sasi-strategies}
	The following statements hold:
	\begin{enumerate}
		\item The positive winning strategy $\pi^{\poswin(\calF)}$ in $\calM$ to visit $\calF$ is an \ac{spi} strategy.
		\item The almost-sure winning strategy $\pi^{\asw(\calF)}$ in $\calM$ to visit $\calF$ is a \ac{sasi} strategy.
	\end{enumerate}
\end{theorem}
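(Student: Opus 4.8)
The plan is to prove both items by transporting the winning strategy from $\calM$ back to $M$ and then checking the two clauses of Definition~\ref{def:spi-strategy}. First I would make the correspondence between the two models explicit: by Definition~\ref{def:improvement-mdp}, the memory bit of a state $(s,m)\in V$ is a deterministic function of the last transition of $M$ (it equals $1$ exactly on improving transitions), so plays and prefixes of $M$ are in bijection with those of $\calM$. Fixing the prefix $\nu = s_0\ldots s_k$ of interest and taking $\calM$ rooted at $(s_k,0)$ (Definition~\ref{def:improvement-mdp} with $s_0$ read as $s_k$), a continuation $s_k s_{k+1}\ldots$ of $\nu$ in $M$ corresponds to a play $(s_k,0)(s_{k+1},m_{k+1})\ldots$ of $\calM$; since the root $(s_k,0)\notin\calF$, any visit of such a play to $\calF$ occurs at some index $j>k$. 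Because positive and almost-sure winning strategies for reachability are memoryless, a winning strategy $\pi$ for $\reach{\calF}$ in $\calM$ induces a finite-memory strategy $\bar\pi$ on $M$ (the memory being the bit $m$), and the $\bar\pi$-consistent continuations of $\nu$ are exactly the images of the $\pi$-consistent continuations in $\calM$.

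For clause (1) (safety), in both items: every continuation of $\nu$ consistent with $\bar\pi$ is the image of a $\pi$-consistent play of $\calM$, and every transition enabled by $\Delta$ satisfies the safety clause of Definition~\ref{def:improvement-mdp}; hence Proposition~\ref{prop:imdp-properties}(1) gives that $s_0\ldots s_j$ is not a weakening of $s_0\ldots s_k$ for any $j>k$, which is exactly clause (1).

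For clause (2) (improvement): in item~1, $\pi = \pi^{\poswin(\calF)}$ positive winning gives $\cone(\calM,(s_k,0),\pi)\cap\reach{\calF}\neq\emptyset$, so by the corollary that a play of $\calM$ is improving iff it visits $\calF$, there is a $\bar\pi$-consistent continuation of $\nu$ containing an improving transition $s_{j-1}\to s_j$ with $j>k$; in item~2, $\pi = \pi^{\asw(\calF)}$ almost-sure winning gives $\cone(\calM,(s_k,0),\pi)\subseteq\reach{\calF}$, so the same holds for \emph{every} such continuation. It then remains to upgrade ``$s_{j-1}\to s_j$ is an improving transition'' to ``$s_0\ldots s_j$ is an improvement of $s_0\ldots s_k$'' in the sense of Definition~\ref{def:improvement}.

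I expect this upgrade to be the crux. By Lemma~\ref{lma:memoryless} the improving transition only witnesses $\mathsf{MP}(s_j)$ strictly gaining over $\mathsf{MP}(s_{j-1})$, whereas clause (2) demands a strict gain over $\mathsf{MP}(s_k)$, and these may differ. I would isolate the needed fact as a lemma: if every transition along $s_k,\ldots,s_{j-1}$ is safe (weakening-free, Definition~\ref{def:improvement-mdp}(2)) and $s_{j-1}\to s_j$ is improving, then some $R'\in\mathsf{MP}(s_j)$ is strictly preferred to some element of $\mathsf{MP}(s_k)$; the proof would chain the witness backwards using transitivity of $\succeq$ and the elementary composition that $R''\strictpref R'$ and $R'\weakpref R$ imply $R''\strictpref R$. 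The delicate point is that the intermediate segment may carry outcomes that are mutually incomparable, so weakening-freeness alone does not automatically chain a strict improvement all the way back; the argument will need to exploit the finer (non-improving, hence indifferent-or-incomparable) structure of the intermediate transitions --- or, equivalently, take $j$ to be the first improving index after $k$ --- to make the propagation go through.
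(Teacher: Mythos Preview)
Your overall architecture matches the paper's: the paper's argument for Theorem~\ref{thm:spi-sasi-strategies} is a one-sentence appeal to Proposition~\ref{prop:imdp-properties} and the corollary that a play of $\calM$ is improving iff it visits $\calF$. What you call the ``crux'' is precisely the step the paper leaves implicit, and you are right to flag it: visiting $\calF$ at index $j$ only certifies (via Proposition~\ref{prop:imdp-properties}(2) and Lemma~\ref{lma:memoryless}) that some $R'\in\mathsf{MP}(s_j)$ is strictly preferred to some $R\in\mathsf{MP}(s_{j-1})$, whereas Definition~\ref{def:spi-strategy}(2) demands a strict gain over $\mathsf{MP}(s_k)$.

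Unfortunately, the lemma you propose to close this gap is false, and neither ``chain the witness backwards by transitivity'' nor ``take $j$ to be the first improving index'' saves it. Take three objectives with $\reach{F_C}\strictpref\reach{F_B}$ and $\reach{F_A}$ incomparable to both; build an \ac{mdp} where $s_0$ has a deterministic action to an $F_A$-state and a stochastic action to $\{s_1,\mathrm{sink}\}$, $s_1$ has a deterministic action to an $F_B$-state and a stochastic action to $\{s_2,\mathrm{sink}\}$, and $s_2$ has a deterministic action to an $F_C$-state. Then $\mathsf{MP}(s_0)=\{\reach{F_A}\}$, $\mathsf{MP}(s_1)=\{\reach{F_B}\}$, $\mathsf{MP}(s_2)=\{\reach{F_C}\}$; the step $s_0\to s_1$ is safe and non-improving (incomparable), the step $s_1\to s_2$ is safe and improving, so $(s_2,1)\in\calF$ is positively reachable from $(s_0,0)$. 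Yet $s_0s_1s_2$ is \emph{not} an improvement of $s_0$ in the sense of Definition~\ref{def:improvement}, since $\reach{F_C}\not\strictpref\reach{F_A}$; indeed no continuation of $s_0$ is an improvement of $s_0$, so $\pi^{\poswin(\calF)}$ is not \ac{spi} here. The same incomparability phenomenon also breaks the transitive reading of Proposition~\ref{prop:imdp-properties}(1) that your safety paragraph relies on (safe one-step transitions through incomparable $\mathsf{MP}$-sets need not compose to ``not a weakening of $s_0\ldots s_k$''). In short, the gap you isolated is genuine and is not repairable along the lines you sketch; the statement would need either an additional hypothesis (e.g., comparability of the relevant $\mathsf{MP}$-sets along safe paths) or a strengthened improvement \ac{mdp} whose final set is tied to $\mathsf{MP}(s_k)$ rather than to the predecessor state.
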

%\begin{proof}
%	By definition, for any almost-sure winning strategy $\pi^{\asw(\calF)}$, every play $\rho \in \cone(\calM, v_0, \pi)$ visits $\calF$ with probability one \cite{}. Hence, statement (2) follows by Lma.~\ref{lma:imdp-opportunistic-states}. The proof for statement (1) is similar. 
%\end{proof}

The proof follows from the fact that there exists a (resp., every) play $\rho \in \cone(\calM, v_0, \pi)$ induced by any positive (resp., almost-sure) winning strategy $\pi$ visits $\calF$ with positive probability (resp., probability one) \cite{chatterjee2012survey}. Therefore, Thm.~\ref{thm:spi-sasi-strategies} establishes that by following $\pi^{\poswin(\calF)}$ (resp., $\pi^{\asw(\calF)}$), the agent is ensured to make an improvement with a positive probability (resp., with probability one).

\begin{algorithm}[t]
	\caption{Level set for constructing \ac{sasi} strategy}
 	\label{alg:competitive-strategy}
	
	\begin{algorithmic}[1]
		\item[\textbf{Inputs:}] Improvement \ac{mdp}, $\calM = \langle \rangle $.
		\item[\textbf{Outputs:}] Level set, $\calW$.
		
		\State $i \gets 0$ 
		\State $R_i \gets \calF$
		\While{$R_i$ is not empty}
		\State $W_{i+1} \gets \asw(R_i)$
		\State $R_{i+1} \gets \{(s, 1) \in \calF \mid (s, 0) \in W_{i+1}\}$
		\If{$i = 0$}
		\State Add $V \setminus W_{i+1}$ to level $0$ in $\calW$. 
		\EndIf
		\State Add $W_{i+1}$ to level $i+1$ in $\calW$. 
		\State $i \gets i + 1$
		\EndWhile
		\State \Return $\calW$
%		\aknote{Change $F_i$ since it's used for outcomes in Reach.}
	\end{algorithmic}
\end{algorithm}

The \ac{spi} and \ac{sasi} strategies from Thm.~\ref{thm:spi-sasi-strategies} guarantee that at least one improvement will occur with positive probability or with probability one. Next, we present Alg.~\ref{alg:competitive-strategy}, using which we can determine the maximum number of improvements that can \emph{almost-surely} be made from a given state in $\calM$. The algorithm to determine the maximum number of improvements possible from a given state in $\calM$ with \emph{a positive probability} and its properties are similar to Alg.~\ref{alg:competitive-strategy}.%\jf{Let's do not talk about positive improvement. First, there is no proof for it, I am not sure the generalization is that simple. Second, that solution concept is not so appealing.} %, by replacing $\asw$ by $\poswin$.

First, note the following properties of the improvement \ac{mdp} which follow from the construction of \ac{mdp}.

\begin{proposition}
\label{prop:symmetric-outgoing-acts}
	Consider two states $(s, 0), (s, 1) \in V$, it holds that for any action $a \in A$, we have $\supp(\Delta((s, 0), a)) = \supp(\Delta((s, 1), a))$.
\end{proposition}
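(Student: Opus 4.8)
The plan is to unwind Definition~\ref{def:improvement-mdp} and observe that whether a state $(s', m')$ lies in $\supp(\Delta((s, m), a))$ is governed entirely by the $S$-components $s$ and $s'$ together with the action $a$; the memory bit $m$ of the source plays no role at all. So the proof is pure bookkeeping against the definition, with one (trivial) appeal to Lemma~\ref{lma:memoryless}.

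Concretely, fix $s \in S$ and $a \in A$. First I would isolate the set of one-step-reachable $S$-components and check it is memory-independent: let $\mathsf{Post}(s, a) = \{s' \in S \mid T(s, a, s') > 0 \text{ and } R \not\strictpref R' \text{ for all } R \in \mathsf{MP}(s),\, R' \in \mathsf{MP}(s')\}$. Conditions (1) and (2) of Definition~\ref{def:improvement-mdp} mention only $s$, $a$, $s'$ and the sets $\mathsf{MP}(s)$, $\mathsf{MP}(s')$; by Lemma~\ref{lma:memoryless} the latter are functions of the $S$-components alone, so $\mathsf{Post}(s, a)$ is well-defined and does not mention $m$.

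Next I would note that condition (3) of Definition~\ref{def:improvement-mdp} assigns to each admissible pair $(s, s')$ a \emph{unique} memory bit: set $\mu(s, s') = 1$ if there exist $R \in \mathsf{MP}(s)$ and $R' \in \mathsf{MP}(s')$ with $R' \strictpref R$, and $\mu(s, s') = 0$ otherwise. Then $(s', m') \in \supp(\Delta((s, m), a))$ if and only if $s' \in \mathsf{Post}(s, a)$ and $m' = \mu(s, s')$. Since neither $\mathsf{Post}(s, a)$ nor $\mu(s, s')$ depends on $m$, it follows that $\supp(\Delta((s, 0), a)) = \{(s', \mu(s, s')) \mid s' \in \mathsf{Post}(s, a)\} = \supp(\Delta((s, 1), a))$, which is exactly the claim.

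There is essentially no obstacle here: the statement is a direct consequence of the fact that Definition~\ref{def:improvement-mdp} never lets the source memory bit influence either the set of enabled transitions or the target memory bit. The only point needing care — and it is settled by Lemma~\ref{lma:memoryless} — is that $\mathsf{MP}(\cdot)$ is a function of the state alone, so that the safety and improvement clauses are genuinely memory-independent.
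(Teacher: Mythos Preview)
Your argument is correct and is essentially the same as the paper's: both observe that the three clauses of Definition~\ref{def:improvement-mdp} governing $\Delta((s,m),a,(s',m'))$ depend only on $s$, $a$, and $s'$, never on the source memory bit $m$, so the supports for $m=0$ and $m=1$ coincide. Your appeal to Lemma~\ref{lma:memoryless} is harmless but unnecessary, since $\mathsf{MP}(s)$ is already defined directly on states just before Definition~\ref{def:improvement-mdp}.
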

The proof is straightforward because given $(s,0), (s,1)$, for any action $a\in A$, if a transition from $s$ to $s'$ given $a$ is improving, then $\Delta((s,0),a,(s',1))>0 $ and $\Delta((s,1),a,(s',1))>0$. Else, $\Delta((s,0),a,(s',0))>0 $ and $\Delta((s,1),a,(s',0))>0 $.

\begin{corollary}
\label{cor:symmetry-in-imdp}
	The final states $\calF$ can be visited \emph{again} from a state $(s, 1) \in V$ with a positive probability (resp., with probability one) if and only if $\calF$ can be visited from $(s, 0)$ with a positive probability (resp., with probability one). 
\end{corollary}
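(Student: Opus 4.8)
The key fact is Prop.~\ref{prop:symmetric-outgoing-acts}: $(s,0)$ and $(s,1)$ have identical sets of enabled actions and, for each such action, the same set of possible successors. The plan is to use this to show that $(s,0)$ and $(s,1)$ admit exactly the same sets of consistent suffixes under corresponding strategies, and that on those suffixes the relevant event is simply ``reach $\calF$''. Concretely, I would set up a strategy correspondence: given any finite-memory nondeterministic strategy $\pi$ on $\calM$, define $\pi'$ by $\pi'((s,1)\,w)=\pi((s,0)\,w)$ for every finite word $w$ over $V$ (and arbitrarily on histories not starting with $(s,1)$). Then I would verify, by induction on prefix length---using Prop.~\ref{prop:symmetric-outgoing-acts} at the first step and the identity of all later transitions---that a suffix $\varrho\in V^\omega$ makes $(s,1)\varrho$ consistent with $\pi'$ if and only if it makes $(s,0)\varrho$ consistent with $\pi$; hence $\cone(\calM,(s,1),\pi')=\cone(\calM,(s,0),\pi)$ as subsets of $V^\omega$, and $\pi\mapsto\pi'$ is a bijection up to the irrelevant values on non-matching histories.

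Next I would line up the reachability events. Since $(s,1)\in\calF$ but $(s,0)\notin\calF$, a play $(s,1)\varrho$ visits $\calF$ \emph{again}---that is, at some index $\ge 1$---exactly when the suffix $\varrho$ visits $\calF$, while a play $(s,0)\varrho$ visits $\calF$ at all exactly when $\varrho$ visits $\calF$; in both cases the relevant event is $\varrho\in\reach{\calF}$. Combining with the strategy correspondence, $\cone(\calM,(s,0),\pi)\cap\reach{\calF}\neq\emptyset$ iff $\cone(\calM,(s,1),\pi')\cap\reach{\calF}\neq\emptyset$, and $\cone(\calM,(s,0),\pi)\subseteq\reach{\calF}$ iff $\cone(\calM,(s,1),\pi')\subseteq\reach{\calF}$. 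Quantifying over strategies then yields $(s,0)\in\poswin(\calF)$ iff $(s,1)$ has a strategy reaching $\calF$ again on a positive-probability branch, and $(s,0)\in\asw(\calF)$ iff $(s,1)$ has a strategy reaching $\calF$ again on every branch; translating these set-based winning conditions into the probabilistic statements, exactly as is done after Thm.~\ref{thm:spi-sasi-strategies}, gives the corollary.

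The step I expect to require the most care is pinning down the meaning of ``visit $\calF$ again'' from $(s,1)$: one must explicitly discard the trivial occurrence at index $0$, and it is precisely this bookkeeping that makes the suffix-level reachability event coincide with ``visit $\calF$'' from $(s,0)$. Once that is settled, the remainder is a direct transfer through the strategy correspondence, which is itself an immediate consequence of Prop.~\ref{prop:symmetric-outgoing-acts}.
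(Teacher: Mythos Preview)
Your proposal is correct and follows essentially the same approach as the paper: both use Prop.~\ref{prop:symmetric-outgoing-acts} to transfer a winning strategy between $(s,0)$ and $(s,1)$ by keeping the same first action and the same continuation thereafter. Your version is more thorough---you make the strategy correspondence a bijection, prove both directions, and explicitly handle the ``again'' bookkeeping---whereas the paper only sketches the $(s,0)\Rightarrow(s,1)$ direction for positive winning and waves at the rest.
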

\begin{proof}
	Let $\pi$ be a positive winning strategy to visit $\calF$ from $(s, 0)$. Let $Y = \supp(\Delta((s, 0), a))$ for some $a \in \pi((s, 0))$. By the property of a positive winning strategy, a state in $\calF$ is reached with positive probability by following $\pi$ from any state in $Y$. By Proposition~\ref{prop:symmetric-outgoing-acts}, $Y = \supp(\Delta((s, 1), a))$. Therefore, by choosing $a$ at $(s, 1)$ and then following $\pi$, a state in $\calF$ is visited with positive probability from $(s, 1)$. The proof for almost-sure winning is similar. 
\end{proof}

Intuitively, Alg.~\ref{alg:competitive-strategy} constructs a set $\calW$ of level sets such that from any state that appears at $k$-th level in $\mathcal{W}$, at least $k$ visits to $\calF$ are guaranteed and, thereby, at least $k$ improvements can be made.

For this purpose, it iteratively computes the almost-sure winning region to visit the states in $R_i \subseteq \calF$, from which $\calF$ can be visited at least $i$ times. We denote by $W_i$ the $i$-th level set. % and prove the following properties: 
The level-$0$ of $\calW$ contains the states $V \setminus \asw(\calF)$ from which $\calF$ cannot be visited again with probability one. That is, $0$-visits to $\calF$ are guaranteed from any state in level-$0$ of $\calW$. Every state in level-$1$ of $\calW$ is almost-surely winning to visit $\calF$. Hence, at least one visit to $\calF$ is guaranteed. Now, consider the subset $R_1 = \{(s, 1) \in \calF \mid (s, 0) \in W_{1}\}$ of final states $\calF$. By Corollary~\ref{cor:symmetry-in-imdp}, because $(s,0)\in W_1 = \asw(\calF)$, there exists a strategy from every state in $R_1$ to visit $\calF$ with probability one. Therefore, from any state $(s, 0) \in W_2 = \asw(R_1)$ at least two improvements are guaranteed---first, when visiting $(s', 1) \in R_1$ and, second, when visiting $R_0 = \calF$ by following the almost-sure winning strategy at $(s', 1)$. Repeating a similar argument, it follows that at least $k$-visits are guaranteed almost-surely from states at $k$-th level in $\calW$. %\deleted{Following this construction, one can obtain a strategy that achieves the maximum number of improvements with probability one.}

The largest integer $k \geq 0$ such that the state $(s, 0) \in V$ appears at $k$-th level of $\calW$ is called the rank of the states $(s, 0)$ and $(s, 1)$, denoted as $\rank(s, 0) = \rank(s, 1) = k$.

\begin{proposition}
\label{prop:sasi-rank}
    From any state $v = (s, m) \in V$, $m \in \{0, 1\}$, there exists a strategy to visit $\calF$ at least $\rank(v)$-many times. 
\end{proposition}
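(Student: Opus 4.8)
The plan is to establish, by induction on $k$, the (inequality) reformulation: \emph{for every state $v \in V$ with $\rank(v) \ge k$ there is a finite-memory strategy $\pi$ such that every play in $\cone(\calM, v, \pi)$ visits $\calF$ at least $k$ times}; the proposition is then the instance $k = \rank(v)$. By Definition~\ref{def:improvement-mdp}, the second component of each state along a play flags whether the preceding transition was improving, so ``visiting $\calF$ at least $k$ times'' is the same as having at least $k$ positions $i \ge 1$ with $v_i \in \calF$, that is, at least $k$ improvements.

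The base case $k = 0$ is vacuous. For the inductive step, assume the claim for $k$ and let $v = (s, m)$ with $\rank(v) \ge k+1$. First I would note, via Proposition~\ref{prop:symmetric-outgoing-acts}, that the $m$-component of a state never restricts which successors are available, so the number of improvements that can be guaranteed from $(s,0)$ equals that from $(s,1)$; hence it suffices to treat $v = (s,0)$. Since the level sets produced by Algorithm~\ref{alg:competitive-strategy} are nested, $\rank(s,0) \ge k+1$ gives $(s,0) \in W_{k+1} = \asw(R_k)$, so $\pi^{\asw(R_k)}$ makes every play in its cone reach $R_k \subseteq \calF$; as $R_k \subseteq \calF = \{(s,1)\mid s\in S\}$ does not contain $(s,0)$, this entry occurs at some position $i \ge 1$ and is the first guaranteed improvement. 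Let $w = (s',1) \in R_k$ be any state reachable this way. By the definition of $R_k$ we have $(s',0) \in W_k$, so $\rank(w) = \rank(s',0) \ge k$ (possibly strictly larger, which is why the inductive statement is phrased with an inequality), and the induction hypothesis yields a strategy from $w$ guaranteeing at least $k$ further improvements. Concatenating $\pi^{\asw(R_k)}$ (run until the first entry into $R_k$) with that strategy produces a finite-memory strategy from $v$ under which every consistent play visits $\calF$ at least $k+1$ times; this is exactly the ``count-down'' strategy described informally before the proposition.

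The step I expect to demand the most care is the accounting at the junction of the two strategies: the entry into $R_k$ that witnesses the first improvement must not be counted again by the induction hypothesis applied at $w$. Phrasing the count in terms of positions $i \ge 1$, together with the reduction to initial states of the form $(s,0)$ -- which lie outside every $R_j$ -- is what makes this precise, as it forces the first entry into $R_k$ to happen strictly after leaving $v$. The companion subtlety, that an almost-sure winning strategy computed on the $(s,0)$-copies of states must be executed starting from an $(s,1)$-copy once an improvement has been made, is exactly what Proposition~\ref{prop:symmetric-outgoing-acts} and Corollary~\ref{cor:symmetry-in-imdp} are there to handle.
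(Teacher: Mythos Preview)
Your proposal is correct and takes essentially the same approach as the paper: both chain the almost-sure winning strategies $\pi^{\asw(R_{k-1})},\pi^{\asw(R_{k-2})},\ldots,\pi^{\asw(R_0)}$ to realise the ``count-down'' through the level sets, invoking the $(s,0)/(s,1)$ symmetry (Proposition~\ref{prop:symmetric-outgoing-acts}/Corollary~\ref{cor:symmetry-in-imdp}) to continue after each improvement. The only difference is presentational---you package the argument as an induction on $k$ with an inequality hypothesis and are explicit about the double-counting and strategy-splicing issues, whereas the paper unrolls the same construction directly.
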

\begin{proof}
We prove this by constructing the strategy that achieves $\rank(v)$ improvements: First, if $\rank(v)=k$, then by construction it is in $\asw(R_{k-1})$. Following the almost-sure winning strategy a state in $R_{k-1}$ can be reached with probability one and thus the first improvement is made. Upon reaching a state, say $(s',1)$, in $R_{k-1}$, one identify $(s',0)\in W_{k-1}$. Because $W_{k-1}= \asw(R_{k-2})$, an almost-sure winning strategy exists to reach $R_{k-2}$ and hence the second improvement. Repeating the similar steps, eventually $R_0$ will be reached after the $k$-th improvement.
\end{proof}

\begin{corollary}
    From any state $v = (s, m) \in V$ at most $\rank(v)$-many visits to $\calF$ are almost-surely guaranteed. 
\end{corollary}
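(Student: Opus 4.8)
The plan is to prove the converse of Proposition~\ref{prop:sasi-rank}: no strategy can force strictly more than $\rank(v)$ visits to $\calF$ with probability one, so that the two results pin the guaranteed number of improvements to exactly $\rank(v)$. Throughout, write $W_1, W_2, \ldots$ and $R_0, R_1, \ldots$ for the sets computed by Alg.~\ref{alg:competitive-strategy}, and recall that $R_i \subseteq R_{i-1}$ and $W_{i+1} = \asw(R_i) \subseteq W_i$ are nested by monotonicity of $\asw$ (with $W_{i+1} = \asw(\emptyset) = \emptyset$ if the loop has already halted, i.e.\ $R_i = \emptyset$), and that $\rank(v) = k$ means $v \notin W_{k+1}$ (and $v \in W_k$ when $k \ge 1$). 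I would first record the bookkeeping fact that a "visit to $\calF$" coincides with an improving transition: by Definition~\ref{def:improvement-mdp}(3) a transition landing in $\calF$ is exactly an improving transition, and two consecutive $\calF$-states are joined by an improving transition, so the number of visits a play makes to $\calF$ \emph{after} time $0$ equals its number of improving transitions. The argument then proceeds by strong induction on $k = \rank(v)$. For the base case $k = 0$ we have $v \notin W_1 = \asw(\calF)$ (and since $\calF \subseteq \asw(\calF)$ this forces $v = (s, 0)$), so by the defining maximality of the almost-sure winning region no strategy reaches $\calF$ from $v$ with probability one, hence none can guarantee even a single improving transition almost surely.

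For the inductive step, assume the bound for all ranks $< k$ and let $\rank(v) = k \ge 1$, so $v \notin W_{k+1} = \asw(R_k)$. Suppose, for contradiction, that some strategy $\pi$ forces at least $k+1$ improving transitions from $v$ with probability one. Decompose the play along its \emph{first} improving transition, which occurs at a (finite, almost surely) time $\tau_1$ and lands in some state $v_{\tau_1} = (s_1, 1) \in \calF$: from $1 = \Pr^{\pi}_{v}[\ge k{+}1\text{ improvements}] = \sum_{h} \Pr^{\pi}_{v}[h]\cdot \Pr^{\pi[h]}_{(s_1,1)}[\ge k\text{ improvements}]$, where $h$ ranges over histories up to the first improving transition and $\pi[h]$ is the residual strategy, and since each factor is at most $1$, every positive-probability $h$ satisfies $\Pr^{\pi[h]}_{(s_1,1)}[\ge k\text{ improvements}] = 1$. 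Thus from $(s_1, 1)$ the residual strategy guarantees $k$ further improvements almost surely; by the structural symmetry of Proposition~\ref{prop:symmetric-outgoing-acts} (the portions of $\calM$ reachable from $(s_1, 0)$ and $(s_1, 1)$ coincide and "number of improving transitions" does not depend on the memory bit — a $k$-improvement strengthening of Corollary~\ref{cor:symmetry-in-imdp}), the same number can be guaranteed from $(s_1, 0)$. The contrapositive of the induction hypothesis then yields $\rank(s_1, 0) \ge k$, i.e.\ $(s_1, 0) \in W_k$, hence $(s_1, 1) \in R_k$ by definition of $R_k$. Therefore $\pi$ drives the play into $R_k$ from $v$ with probability one, i.e.\ $v \in \asw(R_k) = W_{k+1}$ — contradicting $\rank(v) = k$. (When $R_k = \emptyset$ the same chain derives $(s_1, 1) \in R_k = \emptyset$, equally a contradiction.)

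I expect the main obstacle to be the residual-strategy decomposition in the inductive step: rigorously justifying that, conditioned on every positive-probability history ending at the first improving transition, the continuation strategy still forces $k$ further improvements with probability one, and that this "restart at $(s_1,1)$" can be transported to a strategy forcing $k$ improvements from $(s_1,0)$. Both are conceptually routine — the first is the tower-property argument sketched above, the second is the sub-MDP isomorphism already used for Corollary~\ref{cor:symmetry-in-imdp} — but they require care about what "number of visits to $\calF$" counts when the play starts inside $\calF$ (namely, not the initial state), which is the one place the bookkeeping could go astray.
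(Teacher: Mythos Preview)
Your proposal is correct and follows essentially the same contradiction as the paper's sketch: both show that if $k+1$ visits to $\calF$ were almost-surely achievable from $v$ then $v \in W_{k+1}$, contradicting $\rank(v)=k$. You structure this as a forward induction on $\rank(v)$, peeling off the \emph{first} improving transition (with an explicit tower-property decomposition and the symmetry transport to $(s_1,0)$), whereas the paper informally peels backward from the $k$-th visit; the underlying mechanism is the same, and your version fills in the bookkeeping the paper leaves implicit.
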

\begin{proof}[Proof (Sketch)]
    By contradiction. Let $\rank(v) = k$. Suppose that $k + 1$ visits are possible from $v$. Then, following the argument in the proof of Proposition~\ref{prop:sasi-rank}, on making $k$-th visit to $\calF$, the resulting state must still be in $W_1$ so that $k+1$-th visit to $\calF$ could be made. By definition, this means that the state from which $k$-th visit is made is also contained in $W_2$. Using a similar argument repeatedly, it must be the case that $v \in W_{k+1}$, which means that $\rank(v) = k+1$---a contradiction.
    % But since $2$ more visits to $\calF$ are remaining,  
    % the almost-sure winning strategy of states in $W_1$ must visit the states contained in $R_1$ so that one more visit can almost surely be guaranteed. But, by definition, this means that all states in $W_1$ are also in $W_2$. By applying similar argument all states in $W_{\rank(v)}$ must be in $W_{\rank(v) + 1}$. That is, they must have rank $\rank(v) + 1$---a contradiction.
\end{proof}

\textbf{Complexity.} Alg.~\ref{alg:competitive-strategy} runs in polynomial time with respect to the size of $\calM$ since the while loop can run no more than $|V|$ times and the complexity of $\asw$ is quadratic in the size of $\calM$ \cite{baier2008principles}. 

\section{Example: Robot Motion Planning}  
	\label{sec:illustration}
	We illustrate our approach using a motion planning problem for a robot in a $5 \times 5$ gridworld as shown in Figure~\ref{fig:gridworld}. The gridworld environment consists of seven regions: $\{A: (0, 0), B: (2, 0), C: (4, 0), D: (2, 4), E: (4, 4), F: (1, 2)\}$ from which the robot must pick up an item. There is a charging station at cell $(4, 2)$. Each cell denoted using the convention \texttt{(row, col)}. The robot can choose among four actions \texttt{N, S, E, W} to deterministically move north, east, south and west by one cell. The actions \texttt{E, W} are disabled in the cells $(4, 2)$ and $(2, 2)$. The cells $(1, 1), (3, 1), (1, 3), (3, 3)$ are slippery, that is, whenever the robot moves into any of these cells, say $(1, 1)$, it may non-deterministically end up in either the same cell $(1, 1)$, or the cell north to it $(2, 1)$, or south to it $(0, 1)$. In any cell, if applying an action results in a cell that is outside the gridworld or contains an obstacle, the robot returns to the same cell. The robot has limited battery of $5$ units, which it may recharge by visiting the charging station. The robot spends $1$ unit to execute each action.

%We illustrate our approach using a motion planning problem for a cleaning robot over a $5 \times 5$ gridworld as shown in Figure~\ref{fig:}. The gridworld environment consists of a charging station at cell $(0, 4)$, and seven rooms each covering a single cell: $\{0: (1, 4), 1: (1, 1), 2: (3, 3), 3: (3, 4), 4: (4, 3), 5: (4, 4), 6: (3, 0)\}$. The robot has four actions \texttt{N, S, E, W}, of which the actions \texttt{S, E, W} are deterministic. On choosing action \texttt{N}, the robot may either move in the cell north or northwest of the current cell. In any cell, if applying an action results in a cell that is outside the gridworld or contains an obstacle, the robot returns to the same cell. The robot has limited battery of $5$ units, which it may recharge by visiting the charging station. The robot spends $1$ unit of battery to select every action. \ac{activate conditions}.

At the beginning, only the items at $A, B$ and $C$ are available for pickup. That is, if the robot visits the charging station or regions $D, E, F$, then neither its battery will be recharged nor will it be able to pickup items $D, E, F$. When the robot picks up an item at $A$ or $B$, the charging station and the items at $D, E$ become available. When the robot picks up an item at $C$, the charging station and the items at $E, F$ become available. The following preference about picking up the items is given to the robot: $D \strictpref A, E \strictpref A, D \strictpref B, E \strictpref B, E \strictpref C, F \strictpref C$. By default, picking up any item is preferred to not picking up any item.

\begin{figure}
	\centering 
	\includegraphics[scale=0.35]{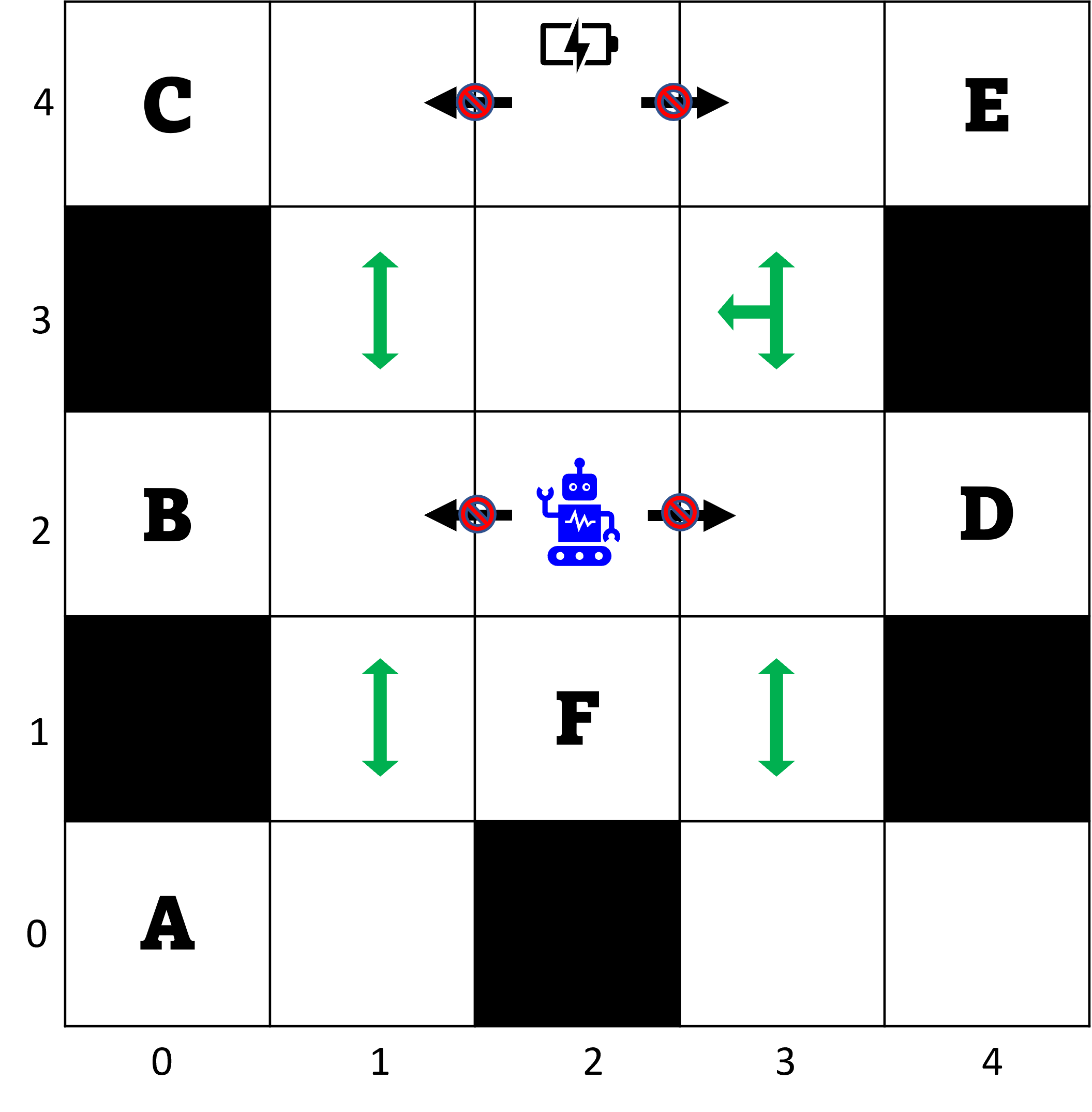}
	\caption{A gridworld example in which the black arrows with no-entry symbol denotes the disabled actions from that state and green arrows show the random outcomes on entering the cell.}
	\label{fig:gridworld}
\end{figure}

% \begin{figure}
% 	\centering 
% 	\begin{tikzpicture}[->,>=stealth',shorten >=1pt,auto,node distance=2cm, scale = 0.5,transform shape,align=center]
% 		\node[state] (0) {\LARGE $\emptyset$};
% 		\node[state] (2) [right of=0] {\LARGE $B$};
% 		\node[state] (1) [above of=2] {\LARGE $A$};
% 		\node[state] (3) [below of=2] {\LARGE $C$};
% 		\node[state] (5) [right of=2] {\LARGE $E$};
% 		\node[state] (4) [above of=5] {\LARGE $D$};
% 		\node[state] (6) [below of=5] {\LARGE $F$};
		
% 		\path 
% 		(0) edge node {} (1)
% 		(0) edge node {} (2)
% 		(0) edge node {} (3)
% 		(1) edge node {} (4)
% 		(1) edge node {} (5)
% 		(2) edge node {} (4)
% 		(2) edge node {} (5)
% 		(3) edge node {} (5)
% 		(3) edge node {} (6)
		
% %		(2) edge node {} (3)
% %		(4) edge node {} (3)
% %		(2) edge node {} (5)
% %		(4) edge node {} (5)
% 		;
% 	\end{tikzpicture}
% 	\caption{}
% 	\label{fig:}
% \end{figure}

%The robot is given a preference to clean various rooms as follows (pictorially shown in Fig.~\ref{}): $1 \strictpref 0, 2 \strictpref 0, 3 \strictpref 0, 6 \strictpref 1, 4 \strictpref 2, 4 \strictpref 3, 5 \strictpref 2, 5 \strictpref 3$. In words, cleaning any room $1$ to $6$ is preferred to cleaning $0$, cleaning $6$ is preferred to cleaning $1$ but is incomparable to cleaning $2, 3, 4, 5$, and cleaning $4, 5$ is preferred to cleaning $2, 3$. Note that cleaning $2$ (resp., $4$) is incomparable to cleaning $3$ (resp., $5$). 

Note that the preference model given to the robot is incomplete as well as combinative. It is incomplete because picking up items $A, B, C$ are mutually incomparable outcomes. Similarly, picking up items $D, E, F$ are mutually incomparable. It is combinative because, for instance, any play in which robot picks up an item from $D$ or $E$ is considered preferred to a play in which robot only picks an item from $A$ or $B$, even though to pick an item from $D$ or $E$ an item from $A$ or $B$ must be picked first.

%We note that the above preference model is \emph{incomplete} as well as \emph{combinative}. It is incomplete due to presence of incomparable outcomes. Consider the situation where the robot starts from cell with charging station $(0, 4)$ with $5$ units of battery. Here, the robot can visit either $0$ or $1$ almost-surely. However, after visiting $1$, the robot does not have a strategy to almost-surely visit $6$, which would lead to an improvement, since it cannot visit the charging station to recharge in case moving \texttt{N} results in cell $(?, ?)$. On the contrary, the robot has almost-sure winning strategy to visit either $2$ or $3$ since there exists a path from $(0, 4)$ to these cells and in case of any misstep, the robot can return the recharge the battery. Note that reasoning with infinite plays is a convenient way to study such cases since although reaching $2$ or $3$ is probability one event but only in the limit. 

We implemented the example in Python 3.9 on a Windows 10 machine with a core i$7$, 2.80GHz CPU and a 32GB memory. We discuss few noteworthy observations next. The \ac{mdp} for this case has $3600$ states and $18496$ transitions, whereas the improvement \ac{mdp} has $7200$ states and $35524$ transitions.

Consider the initial state $s_0 = (2, 2, 8, (1, 1, 1, 0, 0, 0, 0))$ in which the robot is at cell $(2, 2)$ with $8$ units of battery. The fourth component of the state denotes which items are available for pickup with the last element of the tuple reserved for availability of the charging station. In this state, the robot does not have an almost-sure winning strategy to visit any of $A, B$ or $C$. This is because to visit, say $A$, the robot must visit the slippery cell $(1, 1)$. But whenever $(1, 1)$ is visited, the robot may reach $(2, 1)$ with a positive probability. Hence, $\mathsf{MP}(\outcomes(s_0)) = \emptyset$.

When under the \ac{sasi} concept, the rank of the state $(s_0, 1)$ is $2$ indicating that two improvements are almost-surely guaranteed. This is understood by observing the \ac{sasi} strategy which chooses action \texttt{N} at $(s_0, 0)$ to reach $s_1 = (3, 2, 7, (1, 1, 1, 0, 0, 0, 0))$. At $(s_1, 0)$ the strategy selects \texttt{W} and visits either $B$ or $C$ with probability one. Since a pickup from $B$ and $C$ are incomparable, both actions \texttt{N} and \texttt{S} are deemed valid under \ac{sasi} strategy at $(3, 1, 6, (1, 1, 1, 0, 0, 0, 0))$. On visiting either $B$ or $C$, the \ac{sasi} strategy follows the almost-sure winning strategy to visit either $D$ or $E$ to make a second improvement. Since visiting the cell $(3, 3)$ may result in returning back to the cell $(3, 2)$ with a positive probability, the robot can recharge itself until a successful visit to $E$ or $D$ is made.

The \ac{sasi} strategy at $(s_0, 0)$ does not select \texttt{S} because a second improvement cannot be guaranteed with probability one after visiting $A$ since the robot may remain at the cell $(0, 1)$ until its battery runs out. However, we observe that the \ac{spi} strategy at $(s_0, 0)$ allows selection of both actions \texttt{N, S} at $(s_0, 0)$ since in both cases two improvements are possible with positive probability.

We conclude with Table~\ref{tab:summary}, which shows the number of states from which the robot has an \ac{spi} and \ac{sasi} strategies to make at least $1$ or $2$ improvements, since the maximum number of improvements possible under given preference model is $2$. It is noted that the states from which a \ac{sasi} strategy exists are a subset of states from which a \ac{spi} strategy exists.

\begin{table}[t]
	\centering
	% \begin{center}
		\begin{tabular}{||c || c | c ||} 
			
			\hline
			& SASI & SPI \\ [0.5ex] 
			\hline\hline
			Rank-$1$ & 768 & 926 \\ 
			\hline
			Rank-$2$ & 98 & 167 \\
			\hline
		\end{tabular}
		% \end{center}
	\caption{Number of states from which the robot has \ac{spi} and \ac{sasi} strategies to make at least $1$ or at least $2$ improvements.}
	\label{tab:summary}
\end{table}

\section{Conclusion}

In this paper, we introduced two solution concepts, namely \ac{spi} and \ac{sasi} to solve a preference-based planning problem given a combinative, incomplete preference model over infinite plays of a stochastic system. In the improvement \ac{mdp}, we showed that the synthesis of \ac{spi} and \ac{sasi} strategies reduces to that of computing positive and almost-sure winning strategies. Finally, we designed an algorithm using which we can synthesize a strategy that induces the maximum number of improvements under the \ac{sasi} concept. Building on this work, there are a number of future directions: 1) it is possible to  consider a preference over temporal objectives that encompass more general properties such as safety, recurrence, and liveness; 2) it remains open as how to connect qualitative reasoning with quantitative planning with such preference specifications. % The latter requires us to jointly consider how well (the probability) an objective is satisfied and how preferred is the objective.

\bibliographystyle{IEEEtran}
\bibliography{refs}

\begin{thebibliography}{10}
\providecommand{\url}[1]{#1}
\csname url@rmstyle\endcsname
\providecommand{\newblock}{\relax}
\providecommand{\bibinfo}[2]{#2}
\providecommand\BIBentrySTDinterwordspacing{\spaceskip=0pt\relax}
\providecommand\BIBentryALTinterwordstretchfactor{4}
\providecommand\BIBentryALTinterwordspacing{\spaceskip=\fontdimen2\font plus
\BIBentryALTinterwordstretchfactor\fontdimen3\font minus
  \fontdimen4\font\relax}
\providecommand\BIBforeignlanguage[2]{{%
\expandafter\ifx\csname l@#1\endcsname\relax
\typeout{** WARNING: IEEEtran.bst: No hyphenation pattern has been}%
\typeout{** loaded for the language `#1'. Using the pattern for}%
\typeout{** the default language instead.}%
\else
\language=\csname l@#1\endcsname
\fi
#2}}

\bibitem{hastie2010rational}
R.~Hastie and R.~M. Dawes, \emph{Rational choice in an uncertain world: The
  psychology of judgment and decision making}.\hskip 1em plus 0.5em minus
  0.4em\relax Sage, 2010.

\bibitem{baier2008planning}
J.~A. Baier and S.~A. McIlraith, ``Planning with {{Preferences}},'' \emph{AI
  Magazine}, vol.~29, no.~4, p.~25, 2008.

\bibitem{son2006planning}
T.~C. Son and E.~Pontelli, ``Planning with preferences using logic
  programming,'' \emph{Theory and Practice of Logic Programming}, vol.~6,
  no.~5, pp. 559--607, 2006.

\bibitem{bienvenu2011specifying}
M.~Bienvenu, C.~Fritz, and S.~A. McIlraith, ``Specifying and computing
  preferred plans,'' \emph{Artificial Intelligence}, vol. 175, no. 7-8, pp.
  1308--1345, 2011.

\bibitem{stanford2022preferences}
\BIBentryALTinterwordspacing
S.~O. Hansson and T.~Grüne-Yanoff, ``Preferences", the stanford encyclopedia
  of philosophy (spring 2022 edition).'' [Online]. Available:
  \url{https://plato.stanford.edu/archives/spr2022/entries/preferences/}
\BIBentrySTDinterwordspacing

\bibitem{rahmani2022probabilistic}
H.~Rahmani, A.~N. Kulkarni, and J.~Fu, ``Probabilistic planning with partially
  ordered preferences over temporal goals,'' \emph{arXiv preprint
  arXiv:2209.12267}, 2022.

\bibitem{aumann1962utility}
R.~J. Aumann, ``Utility theory without the completeness axiom,''
  \emph{Econometrica: Journal of the Econometric Society}, pp. 445--462, 1962.

\bibitem{hansson2001structure}
S.~O. Hansson, \emph{The structure of values and norms}.\hskip 1em plus 0.5em
  minus 0.4em\relax Cambridge University Press, 2001.

\bibitem{tumova2013least}
J.~Tumova, G.~C. Hall, S.~Karaman, E.~Frazzoli, and D.~Rus, ``Least-violating
  control strategy synthesis with safety rules,'' in \emph{Proceedings of the
  16th international conference on Hybrid systems: computation and
  control}.\hskip 1em plus 0.5em minus 0.4em\relax ACM, 2013, pp. 1--10.

\bibitem{wongpiromsarn2021}
T.~Wongpiromsarn, K.~Slutsky, E.~Frazzoli, and U.~Topcu, ``Minimum-violation
  planning for autonomous systems: Theoretical and practical considerations,''
  in \emph{2021 American Control Conference}, 2021, submitted.

\bibitem{rahmani2020what}
H.~Rahmani and J.~M. O’Kane, ``What to do when you can’t do it all:
  Temporal logic planning with soft temporal logic constraints,'' in \emph{2020
  IEEE/RSJ International Conference on Intelligent Robots and Systems
  (IROS)}.\hskip 1em plus 0.5em minus 0.4em\relax IEEE, 2020, pp. 6619--6626.

\bibitem{mehdipourSpecifyingUserPreferences2021}
N.~Mehdipour, C.-I. Vasile, and C.~Belta, ``Specifying {{User Preferences Using
  Weighted Signal Temporal Logic}},'' \emph{IEEE Control Systems Letters},
  vol.~5, no.~6, pp. 2006--2011, Dec. 2021.

\bibitem{Lahijanian2016}
M.~Lahijanian and M.~Kwiatkowska, ``Specification revision for {{Markov}}
  decision processes with optimal trade-off,'' in \emph{Proc. 55th Conference
  on Decision and Control ({{CDC}}'16)}, 2016, pp. 7411--7418.

\bibitem{li2020probabilistic}
M.~Li, A.~Turrini, E.~M. Hahn, Z.~She, and L.~Zhang, ``Probabilistic preference
  planning problem for markov decision processes,'' \emph{IEEE transactions on
  software engineering}, 2020.

\bibitem{fu2021probabilistic}
J.~Fu, ``Probabilistic planning with preferences over temporal goals,'' in
  \emph{2021 American Control Conference (ACC)}.\hskip 1em plus 0.5em minus
  0.4em\relax IEEE, 2021, pp. 4854--4859.

\bibitem{santhanamRepresentingReasoningQualitative2016}
\BIBentryALTinterwordspacing
G.~R. Santhanam, S.~Basu, and V.~Honavar, ``Representing and {Reasoning} with
  {Qualitative} {Preferences}: {Tools} and {Applications},'' \emph{Synthesis
  Lectures on Artificial Intelligence and Machine Learning}, vol.~10, no.~1,
  pp. 1--154, Jan. 2016, zSCC: 0000006 Publisher: Morgan \& Claypool
  Publishers. [Online]. Available:
  \url{https://www.morganclaypool.com/doi/10.2200/S00689ED1V01Y201512AIM031}
\BIBentrySTDinterwordspacing

\bibitem{kupferman2001model}
O.~Kupferman and M.~Y. Vardi, ``Model checking of safety properties,''
  \emph{Formal methods in system design}, vol.~19, no.~3, pp. 291--314, 2001.

\bibitem{baier2008principles}
C.~Baier and J.-P. Katoen, \emph{Principles of model checking}.\hskip 1em plus
  0.5em minus 0.4em\relax MIT press, 2008.

\bibitem{bouyssou2009concepts}
D.~Bouyssou, D.~Dubois, and M.~Pirlot, \emph{Concepts \& Methods of
  Decision-Making}.\hskip 1em plus 0.5em minus 0.4em\relax John Wiley \& Sons
  Inc., 2009.

\bibitem{de2001quantitative}
L.~de~Alfaro and R.~Majumdar, ``Quantitative solution of omega-regular games,''
  in \emph{Proceedings of the thirty-third annual ACM symposium on Theory of
  computing}, 2001, pp. 675--683.

\bibitem{chatterjee2012survey}
K.~Chatterjee and T.~A. Henzinger, ``A survey of stochastic $\omega$-regular
  games,'' \emph{Journal of Computer and System Sciences}, vol.~78, no.~2, pp.
  394--413, 2012.

\end{thebibliography}

\end{document}